\documentclass[journal]{IEEEtran} 

\IEEEoverridecommandlockouts 

\usepackage{graphicx}
\usepackage{tikz}
\usepackage{makecell}
\usepackage{multirow}
\usepackage{amsmath}
\usepackage{amssymb}
\usepackage{amsthm}
\usepackage[ruled,vlined,linesnumbered]{algorithm2e}
\usepackage{subcaption}
\usepackage{mwe} 
\usepackage{hyperref} 
\usepackage{caption}

\def\checkmark{\tikz\fill[scale=0.4](0,.35) -- (.25,0) -- (1,.7) -- (.25,.15) -- cycle;}

\DeclareMathOperator*{\argmin}{arg\,min}
\DeclareMathOperator{\atantwo}{atan2}

\newtheorem{theorem}{Theorem}[section]
\newtheorem{lemma}{Lemma}[section]
\newtheorem{definition}{Definition}[section]
\theoremstyle{definition}

\newcommand{\bbbeta}{\boldsymbol{\beta}}

\newcommand{\bpi}{\boldsymbol{\pi}}
\newcommand{\bU}{\boldsymbol{\xi}}
\newcommand{\bnu}{\boldsymbol{\nu}}

\newcommand{\state}{\mathbf{x}}

\newcommand{\goalstate}{\mathbf{x_G}}
\newcommand{\inp}{\mathbf{u}}
\newcommand{\sensor}{\mathbf{y}}
\newcommand{\sensorpredict}{\hat{\mathbf{y}}}
\newcommand{\obs}{\mathbf{z}}
\newcommand{\obst}{O}

\newcommand{\actorparams}{\boldsymbol{\phi}}
\newcommand{\criticparams}{\boldsymbol{\psi}}
\newcommand{\traj}{\boldsymbol{\tau}}

\newcommand{\EV}{\mathbb{E}\left[V(\state_{t+1}) \mid \state_{t}, \inp_{t} \right]}
\newcommand{\sensorhist}{N_y}
\newcommand{\horizon}{N_T}

\newcommand{\sensorfunc}{\mathbf{h}}
\newcommand{\costfunc}{\ell}
\newcommand{\constraintfunc}{c}
\newcommand{\rlcostfunc}{g}
\newcommand{\actor}{\boldsymbol{\pi}^{\actorparams}}
\newcommand{\critic}{Q^{\criticparams}}
\newcommand{\approxvaluefunc}{V^{\actorparams\criticparams}}

\newcommand{\sensorpredictfunc}{\sensorfunc^\eta}
\newcommand{\nominaldynamicsfunc}{\mathbf{f}}

\newcommand{\cmark}{\textcolor{green!60!black}{\ding{51}}}
\newcommand{\xmark}{\textcolor{red}{\ding{55}}}
\usepackage{pifont}

\renewcommand{\baselinestretch}{0.99}

\newcommand{\blindreview}[1]{}
\newcommand{\original}[1]{{#1}}

\title{\LARGE \bf RL-Guided PAC-NMPC for Probabilistically-Safe Perception-Based Navigation in Unknown Environments}

\original{
\author{
Adam Polevoy$^{1,2}$,
Dillon Capalongo$^2$,
Katherine Tang$^2$,
Mark Gonzales$^2$,
Marin Kobilarov$^2$,
and Joseph Moore$^{1,2}$%
\thanks{$^{1}$Johns Hopkins University Applied Physics Laboratory, Laurel, MD
	20723, USA.}%
\thanks{$^{2}$Department of Mechanical Engineering, Johns Hopkins University, Baltimore, MD 21218, USA.}%
\thanks{ Email: {\tt\small Adam.Polevoy@jhuapl.edu, dcapalo1@jhu.edu, ktang21@jh.edu, mgonza60@jh.edu, marin@jhu.edu, jlmoore@jhu.edu}}%
}
}

\blindreview{
\author{
[Placeholder for Author List]
\thanks{[Placeholder for Author Affiliations]}%
}
}

\begin{document}

\begin{titlepage}
\vspace*{\fill}
{\large
\copyright 2026 IEEE.  Personal use of this material is permitted.  Permission from IEEE must be obtained for all other uses, in any current or future media, including reprinting/republishing this material for advertising or promotional purposes, creating new collective works, for resale or redistribution to servers or lists, or reuse of any copyrighted component of this work in other works.}
\vspace*{\fill}
\end{titlepage}

\bstctlcite{BSTcontrol}

\maketitle


\begin{abstract}

In this paper, we present an approach for combining stochastic nonlinear model predictive control (SNMPC) and reinforcement learning (RL) to enable probabilistically-safe perception-based navigation in unknown environments. Our method first uses RL to train probabilistic actor-critic and sensor prediction models. We then leverage these probabilistic models in a sampling-based SNMPC framework known as Probably Approximately Correct (PAC)-NMPC, which uses hard constraints to enforce finite-time statistical guarantees on the probability of collision and value function improvement. By ensuring that our finite-horizon SNMPC policies decrease the value function in expectation, we can approach the long-horizon performance of the RL approach while satisfying probabilistic safety constraints. Through simulation experiments, we show that our approach can improve the safety of perception-based RL navigation policies and scale to high dimensional systems with large sensor input spaces and complex nonlinear dynamics. We also demonstrate our approach through hardware experiments, showing improved performance for vision-based navigation with an agile fixed-wing aerial vehicle in unknown environments.

\end{abstract}

\begin{IEEEkeywords}
Aerial Systems: Perception and Autonomy, Optimization and Optimal Control, Robot Safety, Model Learning for Control
\end{IEEEkeywords}


\section{Introduction}

Autonomous perception-based navigation in unknown environments remains a fundamental research challenge for agile, underactuated robots characterized by complex nonlinear dynamics and sizable state spaces. To navigate at-speed, these robots must predictively reason about their underlying dynamics while balancing computational efficiency, performance, and safety. During real-world deployment, not only must they consider the uncertainty in their own dynamics, but also the uncertainty associated with perception, including the uncertainty introduced by a limited field-of-view. Moreover, for many safety-critical applications, these algorithms must also be able to provide performance guarantees on metrics like collision-avoidance and task completion. 


Nonlinear model predictive control (NMPC) is one approach for achieving perception-based navigation for a large class of robotic systems. Given a model of the system dynamics and an objective function, NMPC generates control inputs by repeatedly solving a finite-horizon optimization problem which can also be subject to nonlinear constraints. Robust and stochastic NMPC (RNMPC, SNMPC) explicitly model uncertainty in the underlying dynamical system, and in some cases, can provide safety guarantees (e.g., \cite{mayne2011tube, kohler2020computationally, ozaki2020tube}). Often, such guarantees rely on restrictive assumptions about the structure of the dynamics and uncertainty. In addition, the computational burden of NMPC scales with the complexity of the system dynamics model. As a result, NMPC is often limited to short finite-horizons, resulting in policies with myopic behavior which can significantly limit long-range performance. In some cases, NMPC has been combined with sensor models for perception-based navigation and control (e.g., \cite{9812099,8593739}).

\begin{figure}[t]
    \centering
    \includegraphics[trim={0 240 600 0},clip,width=1.0\columnwidth]{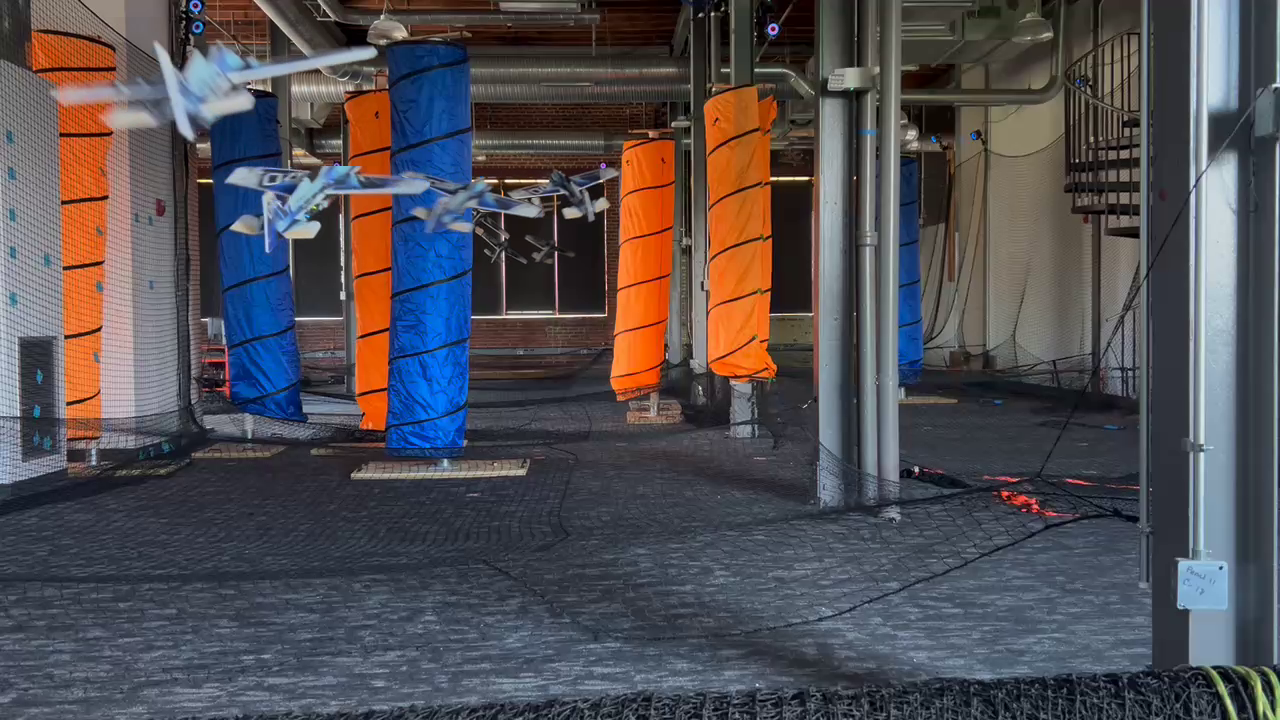}
    \caption{Actor-Critic PAC-NMPC for navigation and obstacle avoidance on a fixed-wing with an on-board depth camera.}
    \label{fig:main-img}
\end{figure}

Reinforcement learning (RL) is another approach for achieving perception-based robot navigation for a general class of robotic systems. Modern, deep RL policies are often trained prior to controller deployment, thus eliminating the need for real-time optimization. Since the policies are often trained to minimize accumulated cost over a discounted infinite horizon, RL approaches typically show improved long-range performance. However, when these policies are only optimized prior to deployment, they can also exhibit degraded performance when operating outside of the training distribution. In general, RL approaches also lack a mechanism to explicitly enforce hard constraints, especially in out-of-distribution environments. Furthermore, because the data requirements of RL often necessitate training in simulation, there is frequently a distribution shift during sim-to-real transfer.

In this article, we present an approach that uses RL-trained networks within a sampling-based SNMPC framework to achieve probabilistically-safe perception-based navigation. In particular, we use a sampling-based SNMPC approach known as Probably Approximately Correct (PAC)-NMPC, which can provide statistical guarantees on cost and constraint satisfaction to ensure safety according to Def. \ref{def:safetyguarantee}. Instead of training the RL networks jointly with the PAC-NMPC policy, we train the RL networks independently and use Monte-Carlo (MC) dropout to approximate the policy and value function distributions. For high-dimensional sensor inputs, we also train a generative sensor network to predict future (unobserved) sensor returns. Given these learned probabilistic models, during deployment, we then warm-start our SNMPC approach using the RL policy distribution and sample terminal SNMPC costs from the value function distribution. 

By combining SNMPC and RL in this way, our approach achieves three distinct advantages. First, it improves the safety of the RL policy according to Def. \ref{def:safetyguarantee} by using PAC-NMPC to provide statistical finite-time run-time guarantees on constraint satisfaction (e.g., local collision avoidance). Second, it also uses PAC-NMPC to provide statistical guarantees on value function improvement over the finite SNMPC horizon and dramatically improves the long-range optimality of the SNMPC policy. Third, by warm-starting the SNMPC policy with the learned actor and using a predictive sensor model, the SNMPC algorithm can be decoupled from the RL policy during training and thus remove the computational burden of an inner NMPC optimization loop. We demonstrate the performance improvements of our approach through simulation and hardware experiments and evaluate our method on the particularly challenging problem of vision-based collision-free navigation with an agile fixed-wing aerial robot.  

This manuscript substantially extends the work in \cite{pacnmpc-lvf}, which introduced an approach for combining PAC-NMPC with a learned stochastic value function. In this paper, we significantly advance this approach to scale to higher state and measurement spaces and three-dimensional environments, and to provide finite-time guarantees in unknown environments. Our key contributions are as follows: 
\begin{enumerate}
\item A constrained PAC-based framework for combining SNMPC with RL to enable both long-horizon planning and enforcement of statistical safety guarantees. 
\item A learned model to predict future sensor measurements, for which closed form dynamics are unavailable in unknown environments
\item An RL-based SNMPC policy warm-starting method for scaling to higher dimensional state measurement spaces.
\item A hardware evaluation of our approach on a fixed-wing aerial vehicle with an on-board depth camera, demonstrating real-time control of robotic systems with local perception and complex nonlinear dynamics. 
\end{enumerate}


\section{Related Work}

A number of approaches have been proposed to improve the safety of RL policies or improve the long-range performance of receding-horizon NMPC. In this section, we provide a brief overview of prior related research on safe RL, learned NMPC warm-starting, NMPC with learned waypoints, and hybrid RL-NMPC methods. We also review prior research on perception-based fixed-wing navigation, since we use this unique planning and control problem to evaluate our approach.
\subsection{Safe RL}
Safe RL methods aim to improve the safety of RL policies by training and/or deploying learned policies that minimize expected future costs while adhering to safety constraints. These problems are often posed as Constrained Markov Decision Processes (CMDP). In \cite{gu-safe-rl-survey}, the authors provide an overview of recent safe RL methods and in \cite{brunke-safe-rl-survey}, the authors provide a more general review of learned safe control under uncertainty. 

Safe policy optimization methods rely on data-driven approaches to learn safe policies. Constrained policy gradient methods \cite{safe-rl-cpo} modify the policy update in an attempt to maintain satisfaction of expected future constraints. Other approaches use Lagrangian relaxation to approximate the problem as unconstrained optimization \cite{safe-rl-lagrange}. However, once outside of the training distribution, these approaches can no longer guarantee constraint satisfaction. 

Control theory based methods introduce explicit models to regulate inputs to ensure safety, thus providing stronger assurances. Many approaches require knowledge of a Lyapunov function in advance to guarantee safety \cite{safe-rl-lyapunov}. Likewise, control barrier functions (CBFs) are often utilized to guarantee safety \cite{safe-rl-cbf}. While the CBFs can be learned from data, they assume that safety can be represented completely as a function of the state, which may not be possible when only partial observations of the environment are available. Safety layer methods take possibly unsafe inputs and project them to inputs that satisfy constraints. OptLayer  follows this methodology by augmenting the policy with a constrained optimization layer \cite{safe-rl-layer}. While this framework is similar to our approach, it only guarantees one-step safety and does not consider dynamics or perception uncertainty. 

Instead of trying to enforce safety constraints, other approaches aim to provide a statistical performance guarantee for the learned policy. PAC-NMPC takes inspiration from PAC Robust Policy Search \cite{props, acprops}, which optimizes a PAC guarantee on the expected cost of the policy. However, these guarantees do not necessarily hold outside of the training distribution. PAC-Bayes control \cite{pacbayes} provides both PAC guarantees on policy performance and generalization guarantees to novel environments by assuming bounded divergence between the training and testing distributions.

\subsection{Learned NMPC Warm-start}

Compared to RL, NMPC can more naturally enforce runtime safety constraints. However, the global optimality of these algorithms is typically limited by computational resources and susceptibility to local minima. To improve this long-range performance, researchers have investigated methods that use machine learning to warm-start the NMPC optimization. Many approaches build data sets of optimal trajectories offline and train models to predict policies or trajectories \cite{stasse2018, hauser2018}. In \cite{pavone2020}, the authors used trajectory data to learn warm-starts for sequential convex programming with obstacle avoidance constraints. Researchers have also learned value functions to select from a history of stored controller data while providing stability guarantees \cite{evenbauer2020}.

Other approaches have focused on initializing not only the control input sequence, but also the associated constraints. In \cite{kvasnica2019}, the authors learn a warm start for the active set, and in \cite{morari2022}, authors scaled learned primal active set initialization to higher dimensional systems. More recently, authors developed a set of constraint-informed merit functions for training, thus making feasibility the target of the warm start \cite{cauligi2025}.

Researchers have also begun to explore using more advanced generative models for learned warm-starting. For instance, in \cite{pavone2024}, researchers used a transformer-based architecture to learn a terminal cost for long horizon guidance. In \cite{reichardt2024}, the authors use Motion Transformer to generate multi-modal warm-starts for escaping local minima in fast-changing traffic scenarios. TransformerMPC \cite{transformer-mpc} improved optimization solve time by training a transformer to select only active constraints in the MPC problem. In \cite{petrik2026}, authors proposed a diffusion-based approach conditioned on object-centric perception data. 

While learned warm-starting has been applied to many diverse and challenging problems such as bipedal locomotion \cite{calinon2020}, self-driving cars \cite{reichardt2024}, and free-flying space robotic platforms \cite{pavone2025}, few of these approaches have considered planning with local perception in unknown environments.

\subsection{NMPC with Learned Waypoints}

In addition to learned warm-starting, researchers have utilized learned waypoints to improve the long-range navigation capabilities of finite-horizon NMPC. In some cases, researchers have used reinforcement learning to generate these waypoints for navigation in unknown environments \cite{sharma2012autonomous}, quadrotor navigation in environments with dead-end corridors \cite{greatwood2019reinforcement}, and navigation in dynamic environments with other agents \cite{brito2021go}. In other cases, supervised learning has been combined with offline perception-based kinodynamic planning to enable drone racing \cite{kaufmann2018deep} and  navigation in real-world cluttered environments \cite{bansal2020combining}. These approaches often assume a separation of the high-level and low-level planning problems and are often unable to reason about the cases where planning and perception are highly coupled during dynamic maneuvers.

\subsection{NMPC \& RL Hybrids}
Researchers have also explored NMPC-RL hybrid methods to overcome the limitations inherent to either NMPC or RL alone in scenarios where high and low-level planning are not easily decoupled. In these architectures, the RL policy often improves the long-horizon behavior, while the NMPC approach improves runtime policy performance, and in some cases, improves convergence during RL training. Many of these approaches utilize NMPC controllers as the RL policy itself. PETS \cite{pets} and POLO \cite{polo} use NMPC policies to learn a probabilistic dynamics model and a probabilistic value function, respectively. MQP \cite{mqp} integrates the action-value function into an information-theoretic NMPC objective.  Related approaches, such as DMPC \cite{dmpc}, which incorporates the value function into the stage cost, and CACTO \cite{cacto}, which leverages NMPC warm-starting in the training loop, demonstrate further convergence improvements. While several of these approaches can be applied to systems with stochastic dynamics \cite{pets,polo,mqp} and in some cases a stochastic value function \cite{polo}, they do not provide safety guarantees.

Safe RL-MPC \cite{safe-rl-mpc} provides safety guarantees of form in Def. \ref{def:safetyguarantee} by learning the parameters of a robust linear MPC controller. However, because nonlinearities must be modeled as disturbances, generated control policies are likely to be overly conservative. Another method, predictive safety filters \cite{predictive-safety-filter} modifies unsafe learned policy inputs using robust constraint-tightening NMPC, which can also lead to conservative policies \cite{kohler2020computationally}. Neither approach has been applied to systems with perception.

LOOP \cite{loop} learns both an observation dynamics model and a value function, and TD-MPC \cite{td-mpc} learns an observation dynamics model in latent space to improve sample efficiency. However, neither approach provides safety guarantees. DiffStack \cite{diffstack} and Actor Critic MPC \cite{actor-critic-mpc} embed differentiable MPC controllers directly into a learned policy. While both of these approaches can accommodate perception inputs, they do not consider stochastic dynamics or provide safety guarantees.

Table \ref{table:mpc-rl-comparison} provides a comparison of the aforementioned RL-NMPC hybrid approaches. Not only does our approach (Actor-Critic PAC-NMPC) reason about both stochastic dynamics and a stochastic value function, but it also provides probabilistic safety guarantees and can accommodate sizable perception input spaces. In addition, we evaluate our approach through hardware experiments and demonstrate the method's applicability to a challenging planning and control task.  

\begin{table}[t]
\centering
\footnotesize
\setlength{\tabcolsep}{2pt}
\renewcommand{\arraystretch}{1.08}

\begin{tabular}{|>{\raggedright\arraybackslash}m{1.64cm} ||
>{\centering\arraybackslash}m{1.14cm}|
>{\centering\arraybackslash}m{1.36cm}|
>{\centering\arraybackslash}m{1.25cm}|
>{\centering\arraybackslash}m{1.25cm}|
>{\centering\arraybackslash}m{1.09cm}|}

\hline
\centering\arraybackslash\makecell[c]{Approach}
& \makecell[c]{Stochastic\\MPC}
& \makecell[c]{Stochastic\\Value Func.}
& \makecell[c]{Safety\\Guarantees}
& \makecell[c]{Perception\\(max dim)}
& \makecell[c]{Hardware\\Demo} \\

\hline
\hline

DMPC\cite{dmpc}, CACTO\cite{cacto}
& \xmark & \xmark & \xmark & \centering\arraybackslash \xmark & \xmark \\
\hline

PETS\cite{pets}, MQP\cite{mqp}
& \cmark & \xmark & \xmark & \centering\arraybackslash \xmark & \xmark \\
\hline

DiffStack\cite{diffstack}
& \xmark & \xmark & \xmark
& \centering\arraybackslash \cmark (5$\times$a)$^*$
& \xmark \\
\hline

AC-MPC\cite{actor-critic-mpc-tro}
& \xmark & \xmark & \xmark
& \centering\arraybackslash \cmark (12$\times$2)
& \cmark \\
\hline

POLO\cite{polo}
& \cmark & \cmark & \xmark
& \centering\arraybackslash \xmark
& \xmark \\
\hline

LOOP\cite{loop},\ TD-MPC\cite{td-mpc}
& \cmark & \xmark & \xmark
& \centering\arraybackslash \cmark (16,\ 84$\times$84$\times$9)
& \xmark \\
\hline

Safe RL-MPC \cite{safe-rl-mpc}, PSF\cite{predictive-safety-filter}
& \cmark & \xmark & \cmark
& \centering\arraybackslash \xmark
& \xmark \\
\hline
\hline

AC-PAC-NMPC (Ours)
& \cmark & \cmark & \cmark
& \centering\arraybackslash \cmark (16$\times$12)
& \cmark \\
\hline

\end{tabular}

\vspace{1mm}

\begin{minipage}{\columnwidth}
\scriptsize
$^{*}$ DiffStack uses a 4-D observation and ego indicator for each agent in the scene.
\end{minipage}

\vspace{1mm}
\caption{Comparison of MPC \& RL hybrid approaches.}
\label{table:mpc-rl-comparison}
\end{table}


\subsection{Vision-based Agile Fixed-Wing Flight}

To evaluate our approach on a highly dynamic system with local perception and complex dynamics, we consider vision-based navigation through an unknown obstacle field with an agile fixed-wing aerial vehicle. Here, we briefly review some of the prior approaches explored to address this problem.

Trajectory libraries have been one method for planning and control of fixed-wing robots \cite{moore2014robust, majumdar2017funnel, barry2018}. Because these libraries are constructed offline, many of the computational challenges associated with generating trajectories for aerial robots, especially across the full flight envelope can be avoided. These approaches have also been more tightly integrated with motion planning approaches to improve performance and enable long-range planning \cite{levin2017agile,levin2019real,bulka2019high}.   

Online trajectory optimization and NMPC are another class of methods for controlling fixed-wing robots. At low angles-of-attack, a number of researchers have leveraged differentially flat representations to achieve real-time trajectory optimization \cite{6915125,bry2015,morando2025trajectory}. To achieve NMPC over a larger flight envelope, researchers have used nonlinear optimization approaches like sequential quadratic programming (SQP) \cite{9196724,agile-fixed-wing, 11127916}, MPPI \cite{pravitra2021flying}, or PAC-NMPC \cite{pacnmpc}. Researchers have also explored differentially flat representations for tail-sitter UAVs that can support aerobatic flight \cite{tal2023aerobatic}.

Fewer approaches have explored perception-based navigation with fixed-wing UAVs, since these approaches often require leveraging light-weight stereo depth cameras with limited fields-of-view which are rigidly mounted to the aircraft body and thus tightly coupled to aircraft motion. Vision-based fixed-wing navigation and obstacle avoidance has been demonstrated with trajectory libraries for low angle-of-attack reactive obstacle avoidance \cite{barry2018, bulka2022reactive} and using NMPC for agile flight in urban environments \cite{9812099, agile-fixed-wing}. In \cite{pacbayes}, researchers used reinforcement learning to generate robust policies for vision-based navigation with a tail-sitter UAV.

In this paper, we demonstrate through both simulation and hardware experiments that our hybrid RL-MPC approach can generate probabilistically safe policies for agile fixed-wing flight and enable probabilistically-safe perception-based obstacle avoidance and navigation.


\section{Problem Formulation}

In this paper, we design a controller capable of navigating a robotic system through an unknown, unstructured environment using on-board perception. The robot must reach the goal, $\goalstate$, while avoiding collisions with partially observed obstacles. We represent the evolution of the robot's state over time as a stochastic dynamical system.
\begin{definition}[Stochastic Dynamics]
    Given states $\state_{t+1}$, $\state_t \in \mathcal{X}$ and input $\inp_t \in \mathcal{U}$, the stochastic dynamics is given as $p(\state_{t+1} \mid \state_t, \inp_t)$, where $t$ is the current time index. 
    \label{def:dynamics}
\end{definition}
Throughout this paper, we assume that $p(\state_{t+1} \mid \state_t, \inp_t)$ depends continuously on $\state_t$ and $\inp_t$.

We assume that the robot has a perception sensor that provides partial observations of obstacles in the environment.
\begin{definition}[Sensor Measurement]
    Let $\obst \in \mathcal{O}$ define the obstacles in the environment. Then, the sensor measurement is $\sensor_t = \sensorfunc(\state_t, \obst)$ where $\sensorfunc:\mathcal{X}\times\mathcal{O} \mapsto \mathcal{Y}$ and $\sensor_t\in \mathcal{Y}$ .
    \label{def:sensor}
\end{definition}
Together, the state and sensor measurement represent for the robot's observation of the world.
\begin{definition}[Observation]
    The observation is given as $\obs_t = (\state_t, \sensor_t)$ where $\obs_t \in \mathcal{X} \times \mathcal{Y}$.
    \label{def:observation}
\end{definition}
We define a stage cost and constraint to define the desired navigation behavior of the robot. Specifically, the cost encourages progress towards the goal, $\goalstate$, while discouraging undesired behaviors. The constraint, dependent on a history of sensor measurements, defines unsafe states and inputs.
\begin{definition}[Stage Cost]
    The stage cost, $\costfunc(\state_t, \inp_t) \succ 0$, is a continuous function given as $\costfunc:\mathcal{X}\times\mathcal{U} \mapsto \mathbb{R}$.
    \label{def:cost}
\end{definition}
\begin{definition}[Stage Constraint]
    The stage constraint, $\constraintfunc(\state_t, \inp_t, \sensor_{(t-\sensorhist):t})$, is a continuous function given as $\constraintfunc:\mathcal{X}\times\mathcal{U}\times\mathcal{Y}^{\sensorhist} \mapsto \mathbb{R}$ where $\constraintfunc(\state_t, \inp_t, \sensor_{(t-\sensorhist):t}) \leq 0$ indicates constraint satisfaction.
    \label{def:constraint}
\end{definition}
We also define a probabilistic safety guarantee as follows:
\begin{definition}[Probabilistic Safety Guarantee]
    Let $\mathcal{S}\subset \mathcal{X}$ be a safe-set. A probabilistic safety guarantee implies
    \begin{align*}
    \mathbb{P}\big(\mathbb{P}\left(\state_{t+i} \in \mathcal{S}, \forall i \in\{0,\ldots,\horizon\} | \state_t\right)\ge 1-\beta_{\mathcal{S}} \big)\ge 1-\delta_{\mathcal{S}}
    \end{align*}
    where $\beta_{\mathcal{S}}, \delta_{\mathcal{S}}$ are small positive values, and $\horizon$ is the horizon length.
    \label{def:safetyguarantee}
\end{definition}
For Def. \ref{def:safetyguarantee}, $\delta_{\mathcal{S}} = 0$ corresponds to Safety Level II in \cite{brunke-safe-rl-survey}.
\section{Background}
Here we review PAC-NMPC \cite{pacnmpc}, a receding-horizon sampling-based SNMPC algorithm, and actor-critic RL, which comprise the main components of our approach. 
\subsection{PAC-NMPC}
PAC-NMPC is a sampling-based SNMPC approach that optimizes and provides probabilistic guarantees on both the expected cost and probability of constraint violation. It provides these guarantees in the form of PAC bounds calculated from sampled policies. Unlike many NMPC approaches, PAC-NMPC optimizes a distribution over feedback policies rather than an open-loop sequence of control actions.

Consider a state and input trajectory, $\traj = (\state_t, \inp_t, \state_{t+1}, \inp_{t+1}, \cdots, \state_{t+\horizon})$ where $\horizon$ is the trajectory horizon and $\traj \in \mathcal{T} \subset \mathcal{X}^{\horizon+1} \times \mathcal{U}^{\horizon}$. PAC-NMPC defines a trajectory cost and trajectory constraint violation indicator.
\begin{definition}[Trajectory Cost]
    The trajectory cost is given as $J: \mathcal{T} \mapsto \mathbb{R}_{\ge 0}$ such that
    \begin{equation}
        J(\traj) = \sum_{i=t}^{t+\horizon-1}\costfunc(\state_i, \inp_i)+\ell_f(\state_{t+\horizon}),
    \end{equation}
    where $\ell_f(\state_{t+\horizon})$ is the terminal cost.
    \label{def:traj_cost}
\end{definition}
\begin{definition}[Trajectory Constraint Violation Indicator]
    The trajectory constraint violation indicator is given as $C: \mathcal{T} \mapsto \{0, 1\}$ such that
    \begin{equation}
        C(\traj) = \bigvee_{i=t}^{t+\horizon-1} \left( \constraintfunc(\state_i,\inp_i,\sensor_{(t-\sensorhist):t}) > 0 \right)
    \end{equation}
    where $C(\traj) = 1$ indicates constraint violation.
    \label{def:traj_constraint}
\end{definition}
PAC-NMPC optimizes local feedback policies, $\inp_t = \bpi_t(\state_t, \bU)$, which are parameterized by open-loop nominal input trajectories, $\bU = (\inp_t^d, \cdots, \inp_{t+\horizon-1}^d)$. To generate these policies, it uses the time-varying linear quadratic regulator (TVLQR) \cite{tvlqr}. The nominal input trajectories are rolled out into state space from an initial state using a nominal, deterministic dynamics model, $\state_{t+1}^d = f(\state_t^d, \inp_t^d)$, resulting in nominal state and input trajectories, $\traj^d$. Time-varying feedback gains, ${\bf K}_t$, are computed from $\traj^d$ using TVLQR. The resulting feedback policy is $\inp_t = {\bf K}_t(\state_t^d - \state_t) + \inp_t^d$.

To optimize these feedback policies, PAC-NMPC defines a Gaussian surrogate exploration distribution, $\mathcal{N}(\bU \mid \boldsymbol{\mu}, \boldsymbol{\Sigma})$, over the nominal input trajectory. The decision variables of the optimization problem are the mean and variance of the surrogate distribution, $\bnu$ = ($\boldsymbol{\mu}$, \text{diag}($\boldsymbol{\Sigma})$). This surrogate distribution is iteratively optimized to minimize the upper PAC bounds, $\mathcal{J}^+_\alpha(\bnu)$ and $\mathcal{C}^+_\alpha(\bnu)$, over the expected trajectory cost and probability of constraint violation:
\begin{alignat}{2}
    \hat{\bnu}^*_{i+1} &=&& \argmin_{\bnu}\min_{\alpha}(\mathcal{J}_\alpha^+(\bnu)+\gamma \mathcal{C}_\alpha^+(\bnu)) \\
    & \text{s.t.} && \hspace{5mm}\mathbb{P}\left(\mathbb{E}\left[J(\traj)\right] \leq \mathcal{J}^+_\alpha(\bnu)\right) \geq 1 - \delta \nonumber\\
    & && \hspace{5mm}\mathbb{P}\left(\mathbb{E}\left[C(\traj)\right] \leq \mathcal{C}^+_\alpha(\bnu)\right) \geq 1 - \delta\nonumber
\end{alignat}
where $\delta$ is a user-defined confidence parameter. The resulting optimized surrogate distribution provides probabilistic guarantees, $\mathcal{J}^+_\alpha(\bnu)$ and $\mathcal{C}^+_\alpha(\bnu)$. Since $C(\traj)$ is a binary indicator, $\mathbb{E}\left[C(\traj)\right]$ represents the probability of constraint violation, resulting in a probabilistic safety guarantee of the same form as Definition \ref{def:safetyguarantee}. The controller executes the feedback policy constructed from the maximum likelihood estimate, $\boldsymbol{\mu}$.

The PAC bounds take the form
\begin{equation}
\mathcal{J}_\alpha^+(\bnu) = \widehat{\mathcal{J}}_\alpha(\bnu) + \alpha d(\bnu) +  \Phi_\alpha 
\end{equation}
where $\widehat{\mathcal{J}}_\alpha$ is a robust estimator of the expected cost, $d$ is a distance term from prior sampled distributions, $\Phi_\alpha$ is a concentration-of-measure term, and $\alpha$ is an annealing and regularizing coefficient. These bounds can be calculated from a finite number of policy samples.

Given $L$ prior surrogate distributions, $\bnu_0, \cdots, \bnu_{L-1}$, and $M$ iid samples from each, $(\traj_{i0}, \bU_{i0}), \cdots, (\traj_{iM-1}, \bU_{iM-1})$, the robust estimate of the expectation is given as
\begin{align}
\widehat {\mathcal J}_\alpha(\bnu) &= \frac{1}{\alpha LM} \sum_{i=0}^{L-1} \sum_{j=0}^{M-1} \psi\left(\alpha \ell_{ij} \right) \\
\psi(x) &= \log\left(1+x+\frac{1}{2}x^2\right) \\
\ell_{ij} &= J(\traj_{ij})\frac{p(\bU_{ij}|\bnu)}{p(\bU_{ij}|\bnu_i)}.
\end{align}

The distance term is given as 
\begin{align}
  d(\bnu)\! &= \! \frac{1}{2L}\sum_{i=0}^{L-1}b_i^2e^{D_2\left(p(\cdot | \bnu)||(p(\cdot | \bnu_i)\right)} \\
  0 &\leq J(\traj_{ij}) \leq b_i \; \forall j = 0, ..., M - 1 \label{eq:divergence_metric}
\end{align}
where $D_2$ is the Renyi divergence.

The concentration-of-measure term is given as 
\begin{equation}
  \Phi_{\alpha}\! = \frac{1}{\alpha LM}\log\frac{1}{\delta}.
\end{equation}

These bounds are local to each finite-horizon policy and conditional on current observations. Thus, they provide finite-time guarantees and do not guarantee global safety for closed-loop execution of the controller over the infinite horizon. 


\subsection{Actor-Critic RL}

Reinforcement learning algorithms usually involve the estimation of the value function, which represents the minimum expected accumulation of future costs. In our case, the value function can be interpreted as the optimal cost-to-go from the current state, $\state_t$, to the goal state, $\goalstate$. For the following definitions, we let $\state_t$ denote the full state of the environment, rather than the state of the robot alone.
\begin{definition}[Optimal Value Function]
    Given some stage cost $\rlcostfunc(\state_t, \inp_t)$ and stochastic dynamics $p(\state_{t+1} \mid \state_t, \inp_t)$, the optimal value function is given as $V: \mathcal{X} \mapsto \mathbb{R}$ such that
    \begin{equation} 
        V(\state_t) = \min_{\inp_t \in \mathcal{U}} \bigg[ \rlcostfunc(\state_t,\inp_t)+ \beta \EV \bigg ]
        \label{eq:sto_optimal_value_function}
    \end{equation}
    where $\beta \in (0, 1)$ is a discount factor.
    \label{def:sto_costtogo}
\end{definition}
\begin{definition}[Optimal Policy] The optimal policy is given as $\bpi^*: \mathcal{X} \mapsto \mathcal{U}$ such that
    \begin{equation}
        \bpi^*(\state_t) = \arg\min_{\inp_t \in \mathcal{U}} \bigg[ \rlcostfunc(\state_t, \inp_t)+\beta \mathbb{E}[V(\state_{t+1}) \mid \state_{t}, \inp_{t} ]\bigg ].
    \end{equation}
\end{definition}

Actor-critic methods often use temporal difference learning to jointly learn approximations of the optimal policy and optimal value function \cite{sutton2018reinforcement}. For most state-of-the-art off-policy methods, which benefit from improved sample efficiency, the critic, $\critic(\state_t, \inp_t)$, parameterized by $\criticparams$, approximates the optimal action-value function, $Q^*(\state_t, \inp_t) = \rlcostfunc(\state_t,\inp_t)+\beta \mathbb{E}\left[V(\state_{t+1}) \mid \state_{t}, \inp_{t} \right]$. The actor, $\actor(\state_t)$, parameterized by $\actorparams$, approximates the optimal policy, $\bpi^*(\state_t)$. The approximation of the optimal value function can be obtained by composing the actor and critic, $\approxvaluefunc(\state_t) = \critic(\state_t, \actor(\state_t))$. Popular actor-critic methods for continuous state and action spaces include PPO \cite{ppo}, SAC \cite{sac}, and TD3 \cite{td3}. 

The optimal value function and policy depend on the full environment state, which includes information beyond the robot state, such as the obstacles, $O$. In our case, since the robot can only partially observe the world, the actor and critic must rely on the robot's observation, $\obs_t$. Throughout this work, we interchangeably write $\approxvaluefunc(\obs_t)$ and $\approxvaluefunc(\state_t, \sensor_t)$ where appropriate, and likewise for $\actor$ and $\critic$. 


\section{Actor-Critic PAC-NMPC (AC-PAC-NMPC)}
\label{section:rlaugmentedpacnmpc}


We now describe our approach for combining PAC-NMPC with actor-critic RL to enable improved long-range performance while providing probabilistic safety guarantees. Although the learned actor and critic do not provide safety guarantees themselves, they can safely be incorporated into PAC-NMPC to guide the optimization.

Our approach utilizes three learned models. We train an actor, $\actor(\obs_t)$, and critic, $\critic(\obs_t, \inp_t)$, using simulated dynamics and sensor models. Additionally, we train a sensor prediction model, $\hat{\sensor}_{t+k} =\sensorpredictfunc(\state_t, \sensor_t, \state_{t+k})$, to predict future sensor measurements along sampled trajectories, thus allowing the actor and critic to be evaluated within PAC-NMPC. An overview of our approach is shown in Figure \ref{fig:block}.

We extend the PAC-NMPC algorithm in three major ways:
\begin{enumerate}
    \item RL-based warm start of the decision variables.
    \item Learned stochastic value function as terminal cost.
    \item Value function improvement constraint along trajectory.
\end{enumerate}
These contributions will be discussed in the subsequent subsections. Since the implementation and training of these learned models is specific to each robotic system, we leave discussion of these components to later sections.

\begin{figure}[t]
    \centering
    \includegraphics[trim={0 0 0 0},clip,width=1.0\columnwidth]{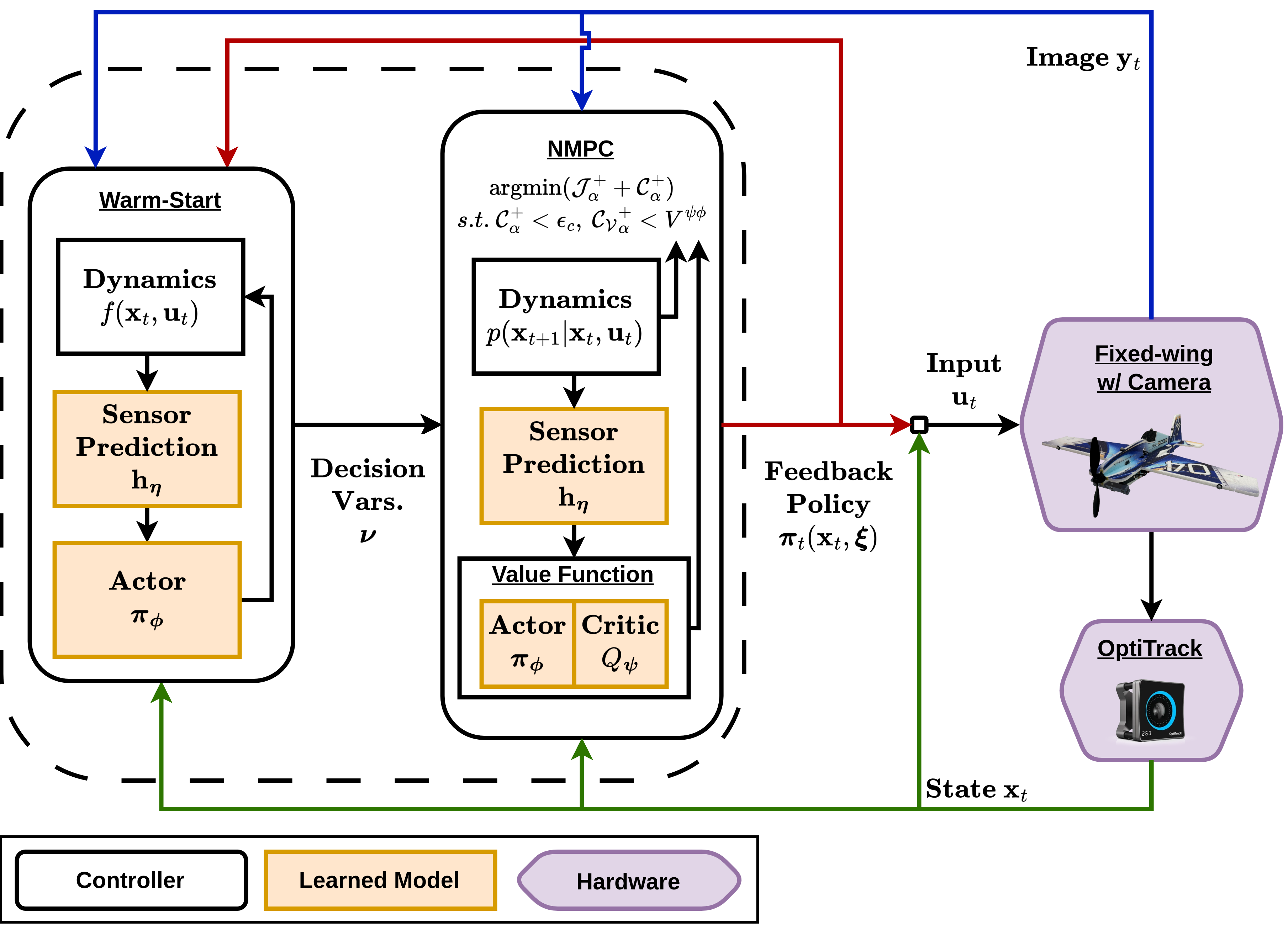}
    \caption{Overview of AC-PAC-NMPC.}
    \label{fig:block}
\end{figure}


\subsection{RL-based Warm Start}

We initialize the PAC-NMPC decision variables using the learned actor policy. Since the RL actor policy is trained to minimize the expected cost-to-go, this warm-start may reduce optimization time to convergence and help avoid non-optimal local minima. For high-dimensional input spaces, we have found that this feature is important for keeping the SNMPC policy within the distribution of the trained RL model. Our warm-start approach is summarized in Algorithm \ref{algo:warm-start}.

Given an initial state, $\state_t$, and sensor measurement, $\sensor_t$, the initial input of the warm-start trajectory can be computed as $\hat{\inp}_t = \actor(\state_t, \sensor_t)$. However, subsequent actions cannot be computed directly, since future states and sensor measurements are needed to evaluate the actor along the rest of the trajectory, $\hat{\inp}_{t+k} = \actor(\state_{t+k}, \sensorpredict_{t+k})$. To address this, we compute future states recursively using a nominal, deterministic dynamics model, $\state_{t+1} = \nominaldynamicsfunc(\state_t, \inp_t)$. 

Unlike the robot state, future sensor measurements cannot be simulated in the same way because they depend on an environment which is unknown \emph{a priori}. Instead, future sensor measurements are estimated using a sensor prediction model, $\sensorpredict_{t+k} = \sensorpredictfunc(\state_{t}, \sensor_{t}, \state_{t+k})$. Rather than predicting the sensor model recursively, $\sensorpredict_{t+1} = \sensorpredictfunc(\state_{t}, \sensor_{t}, \state_{t+1})$, we predict each future sensor measurements at time $t+k$ directly from the initial sensor measurement. Empirically, we found that recursively applying sensor prediction led to error accumulation and degraded sensor measurement prediction.

The resulting actor warm-start initializes the mean of the SNMPC policy's surrogate distribution, $\boldsymbol{\mu}$, while the variance, $\boldsymbol{\Sigma}$, is set to a prespecified high variance to promote exploration. Due to runtime limitations, it is often not possible for PAC-NMPC to fully reduce this exploration variance. However, because we ultimately execute the maximum likelihood estimate of the SNMPC policy, we can perform a final PAC-NMPC policy optimization iteration in which the variances are reduced to a prespecified small value. This produces PAC guarantees that more accurately reflect the executed policies and are often tighter.

While PAC-NMPC can often find a feasible solution with this actor warm-start, when the warm-start produces a trajectory that is severely infeasible (e.g., due to distribution shift), the PAC-NMPC policy distribution may not sample any feasible trajectories. The lack of feasible trajectories in this case will result in failed optimization. To improve out-of-distribution performance, we evaluate the degree to which the actor policy warm-start violates constraints. If it does, we instead default to warm-starting using the previously optimized PAC-NMPC policy, as done in prior research \cite{pacnmpc}. 


\begin{algorithm}[!t]
  \small
  \caption{Warm Start}\label{algo:warm-start}
  \KwIn{$\state_t$, $\sensor_{(t-\sensorhist):t}$}
  $\hat{\inp}_t = \actor(\state_t, \sensor_t)$\;
  \For{$i=t,\cdots,t+\horizon-1$}
  {
    $\state_{i+1} = \nominaldynamicsfunc(\state_i, \hat{\inp}_i)$\;
    $\sensorpredict_{i+1} = \sensorpredictfunc(\state_{t}, \sensor_{t}, \state_{i+1})$\;
    $\hat{\inp}_{i+1} = \actor(\state_{i+1}, \sensorpredict_{i+1})$\;
    \If{$\constraintfunc(\state_{i+1}, \hat{\inp}_{i+1}, \sensor_{(t-\sensorhist):t}) > 0$}
    {
        warm-start from prior policy\;
        \Return
    }
  }
  warm-start from $\boldsymbol{\mu} = (\hat{\inp}_t,\cdots,\hat{\inp}_{t+\horizon-1})$\;
\end{algorithm}


\subsection{Uncertainty-Aware RL-Augmented Cost}
A key limitation of NMPC is the potential myopic behavior from optimizing over a finite horizon. For online planning scenarios, this horizon is frequently restricted by computational resources, especially for robots characterized by complex dynamics and large state spaces. Taking inspiration from \cite{polo}, we use the learned value function as the terminal cost
\begin{equation}
\costfunc_f(\state_{t+\horizon}) = \approxvaluefunc(\state_{t+\horizon}, \sensorpredict_{t+\horizon})
\end{equation}
where $\approxvaluefunc$ is composed of the actor, $\actor$, and critic, $\critic$. A key difference between our approach and \cite{polo} is that our value function is dependent on sensor measurements and, therefore, we predict the terminal sensor measurement, $\sensorpredict_{t+\horizon} = \sensorpredictfunc(\state_t, \sensor_t, \state_{t+\horizon})$. Consequently, the resulting trajectory cost approximates the infinite-horizon perception-based cost-to-go given the current state and sensor measurements.

Also similar to \cite{polo}, we learn an approximate distribution over value functions. However, while \cite{polo} did this to facilitate exploration, we do this so that PAC-NMPC can reason not only about the uncertainty in the robot dynamics, but also the uncertainty in the value function, during run-time execution.

To model the stochastic value function, we implement $\actor$, $\critic$, and $\sensorpredictfunc$ as stochastic models by representing these models as neural networks trained with Monte Carlo (MC) dropout:
\begin{equation}
r^{(l)}_j \sim \text{Bernoulli}(p) \qquad \mathbf{\tilde{y}}^{(l)} = \mathbf{r}^{(l)} * \mathbf{y}^{(l)}
\end{equation}
where $l \in \{1,\cdots,L\}$ indexes the hidden layers, $\mathbf{r}^{(l)}$ is the vector of independent Bernoulli random variables that have a probability $p$ of being $1$, $\mathbf{y}^{(l)}$ denotes the vector of outputs from layer $l$, $*$ denotes an element-wise product, and $\mathbf{\tilde{y}}^{(l)}$ denotes the post-dropout outputs. Dropout was originally proposed as a way of reducing neural network overfitting \cite{srivastava2014dropout}, but has also been shown to approximate a Bayesian neural network when used during inference \cite{pmlr-v48-gal16}.

One traditional shortcoming of MC dropout is that the network must be inferenced many times with a different dropout mask for each forward pass, thus significantly increasing inference time. However, since PAC-NMPC must already inference these networks for each sampled trajectory, applying a different dropout mask for each trajectory results in no additional forward passes. Thus, while other approaches can be used to model uncertainty, such as ensembles and multi-headed networks, we select MC dropout since it incurs both a negligible training time and inference time cost. 

We note that, for simple (e.g., 2D) environments and sensors with large fields-of-view, a learned sensor prediction model may not be necessary for successful navigation.



\subsection{Uncertainty-Aware Value Function Improvement Constraint}
While using the learned value function as the terminal cost encourages less myopic behavior, it does not guarantee that the resulting policy will actually reduce the cost-to-go. Therefore, we also apply a value function improvement constraint along the trajectory.

We introduce an additional PAC-bound, $\mathcal{C_V}^+_\alpha(\bnu)$, on the learned value function at the terminal state such that
\begin{equation}
\mathbb{P}\left(\mathbb{E}\left[\approxvaluefunc(\state_{t+\horizon}, \sensorpredict_{t+\horizon})\right] \leq \mathcal{C_V}^+_\alpha(\bnu)\right) \geq 1 - \delta.
\end{equation}
We constrain the PAC-NMPC optimization problem such that this bound must be less than the learned value function at the initial state of the trajectory
\begin{equation}
\text{s.t.}\quad \mathcal{C_V}^+_\alpha(\bnu) \leq \mathbb{E}\left[\approxvaluefunc(\state_t, \sensor_t)\right].
\end{equation} 
Since in practice $\mathcal{C_V}^+_\alpha(\bnu)$ has a nonzero upper-bound gap such that $\mathbb{E}\left[\approxvaluefunc(\state_{t+\horizon}, \sensorpredict_{t+\horizon})-\approxvaluefunc(\state_t, \sensor_t)\right]<0$ and because this learned value function approximates the cost-to-go, satisfaction of this constraint indicates that the robot is expected to reduce the cost-to-go over the finite planning horizon. Similarly to the prior section, we sample from the stochastic learned value function to incorporate model uncertainty into this PAC bound. In contrast to prior research \cite{pacnmpc-lvf}, which bundled the value function improvement condition into the probability of constraint violation bound, this approach provides two distinct probabilistic guarantees.

The resulting optimization problem is
\begin{align}
\bnu^* &= \argmin_{\bnu}\min_{\alpha>0} (\mathcal{J}^+_\alpha(\bnu) + \gamma \mathcal{C}^+_\alpha(\bnu)) \\ \nonumber
& \text{s.t.}\quad \mathcal{C}^+_\alpha(\bnu) \leq \epsilon_c. \\ \nonumber
& \quad \quad \ \mathcal{C_V}^+_\alpha(\bnu) \leq \mathbb{E}\left[\approxvaluefunc(\state_t, \sensor_t)\right],
\end{align}
where $\epsilon_c$ is the maximum allowable probability of constraint violation. Also, in contrast to \cite{pacnmpc-lvf}, both PAC bounds are imposed as hard constraints in the optimization problem. We leave the constraint violation bound as a penalty in the optimization function itself, as it is often beneficial to further minimize the violation bound below $\epsilon_c$ whenever possible. We solve this optimization problem using SNOPT \cite{snopt}, an SQP algorithm for constrained optimization.


\section{Rally Car UGV with LIDAR: Approach and Evaluation}
\label{section:rallycar}

In this section, we discuss the application of AC-PAC-NMPC to a 1/10th scale rally car platform equipped with a LiDAR sensor (Fig. \ref{fig:rallycar}). This section follows the formulation and includes experimental results presented in \cite{pacnmpc-lvf}. We include this summary of the formulation and experimental results to provide a self-contained presentation of the approach across multiple robotic platforms and to establish a basis for comparison against the more complex system presented in Section \ref{section:fixedwing}. In particular, because this prior research does not leverage the RL warm-start and learned perception prediction components described in the prior section, it helps demonstrate how these components become more critical for dynamical systems with larger states spaces, larger measurement spaces, and a diminished field-of-view. 


\subsection{Rally Car Dynamics and LiDAR Sensor}

The rally car dynamics are represented by a stochastic bicycle model. The state is given as $\state_t=[r_x \ r_y \ \theta \ v \ \delta_s]^T$ where $r_x$, $r_y$ denote the position, $\theta$ is the orientation, $v$ the forward speed, and $\delta_s$ the steering angle. The input is given as $\inp_t=[\dot{v} \ \dot{\delta_s}]^T$ where $\dot{v}$ is the forward acceleration and $\dot{\delta_s}$ is the steering angle rate of change.


The continuous-time, nominal dynamics are given as 
\begin{equation}
\nominaldynamicsfunc(\state_t, \inp_t) = [v\cos(\theta), v\sin(\theta), \frac{v\tan(\delta_s)}{l}, \dot{v}, \dot{\delta_s}]^T
\end{equation}
where $l = 0.33 \mathrm{m}$ is the wheelbase. The stochastic, discrete-time dynamics applies Gaussian noise and Euler integration with a process covariance of $\boldsymbol{\Sigma_f} = diag(\left[4{\rm e}^{-4}, 4{\rm e}^{-4}, 1.1{\rm e}^{-2}, 1{\rm e}^{-1}, 5.6{\rm e}^{-3}\right])$, which was fit from hardware data. There are limits on the acceleration, $\dot{v}\in\left[-1., 1\right] \mathrm{\frac{m}{s^2}}$, the steering rate, $\dot{\delta_s}\in\left[-1, 1 \right] \mathrm{\frac{rad}{s}}$, and the steering angle, $\delta_s\in\left[-0.4, 0.4\right] \mathrm{rad}$.

The rally car was equipped with a 64 beam, $360^{\circ}$ planar LiDAR, which returned range measurements $\sensor_t = [y_t^0 \ y_t^1 \ ... \ y_t^{63}]$ at bearings $\bbbeta = [\beta^0 \ \beta^1 \ ... \ \beta^{63}]$.

\begin{figure}[t]
    \centering

    \begin{minipage}[t]{0.48\columnwidth}
        \centering
        \includegraphics[trim={150 100 150 30},clip,width=\linewidth]{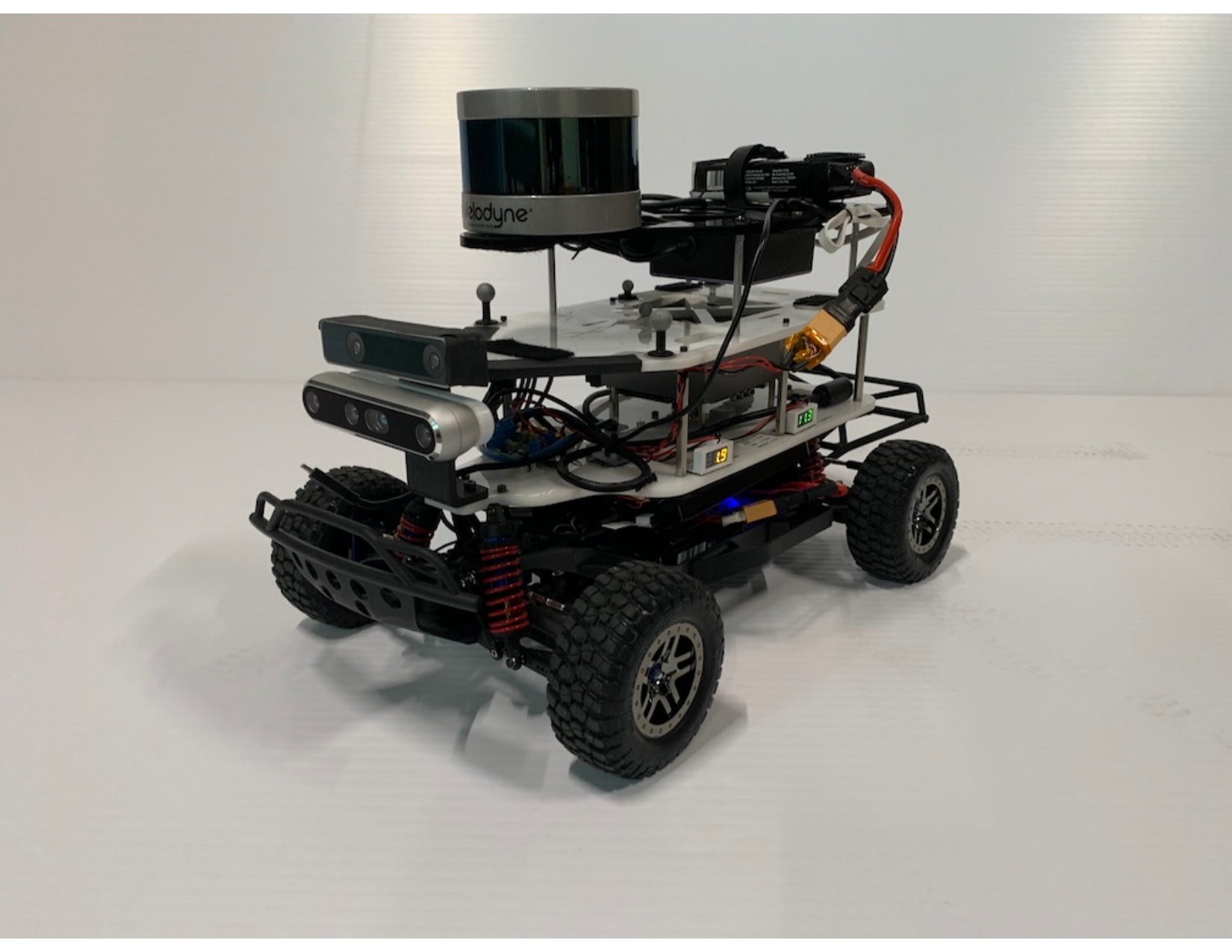}
        \captionof{figure}{1/10th scale rally car platform with LiDAR sensor.}
        \label{fig:rallycar}
    \end{minipage}\hfill
    \begin{minipage}[t]{0.48\columnwidth}
        \centering
        \includegraphics[trim={50 40 90 0},clip,width=\linewidth]{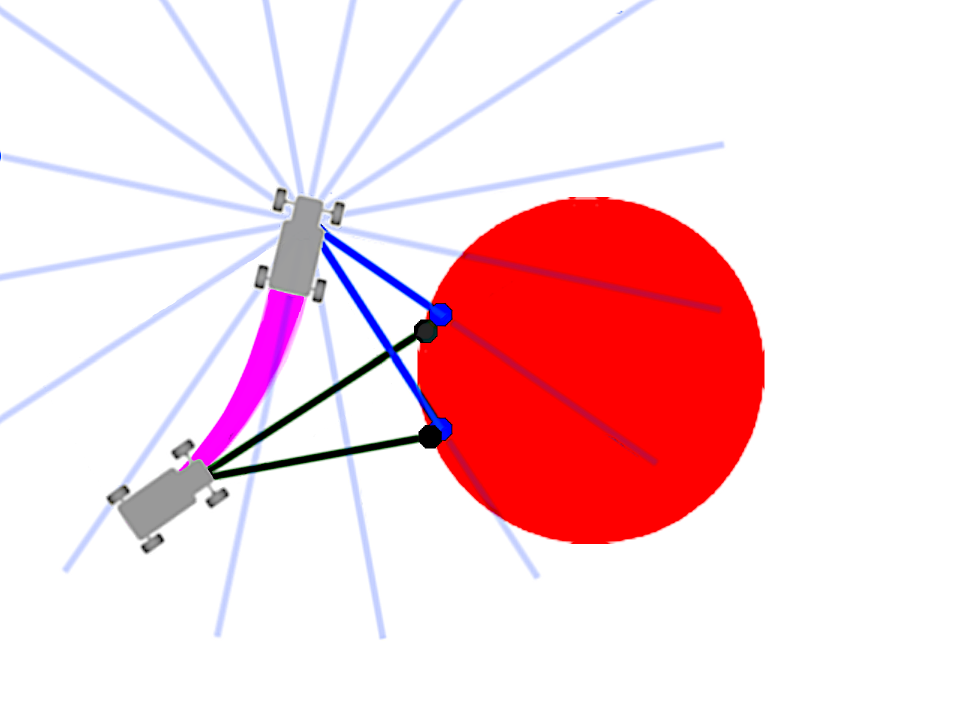}
        \captionof{figure}{Geometric LiDAR prediction.}
        \label{fig:lidar_est}
    \end{minipage}

\end{figure}


\subsection{Actor-Critic}

\label{subsection:rallycar-actor-critic}

Using the dynamics and sensor model described above, the actor and critic networks were trained using the CleanRL \cite{cleanrl} implementation of TD3 \cite{td3}. During training, the stage cost was summed with a constraint violation penalty:
\begin{equation}
\rlcostfunc(\state_t, \inp_t) = \costfunc(\state_t, \inp_t)+\gamma_c \constraintfunc(\state_t, \inp_t)
\end{equation}
where $\gamma_c = 1000$ was a heuristically selected penalty. Unlike the stage constraint used during planning (Def. \ref{def:constraint}), the training constraint, $\constraintfunc(\state_t, \inp_t) \in \{0, 1\}$, utilizes full environmental knowledge, and $\constraintfunc(\state_t, \inp_t) = 1$ indicates constraint violation.

The actor and critic, $\actor(\obs_t)$ and $\critic(\obs_t,\inp_t)$ respectively, were parameterized by fully connected neural networks with two hidden layers, 256 neurons per layer, 10\% dropout, and ReLU activation functions. To aid efficient learning, the observations were preprocessed before being input into the networks. The network inputs consisted of the speed, tangent of the steering angle, range to the goal, cosine and sine of the bearing to the goal, and the LiDAR measurement, which were all normalized. Training environments, which included obstacles, initial states, and goal states, were randomly sampled. These networks were trained until the actor loss converged.


\subsection{Sensor Prediction}

Because these experiments were restricted to a two-dimensional environment with planar dynamics and a sensor model with a 360$^o$ field-of-view, sensor prediction for this system could be performed geometrically and did not incorporate prediction uncertainty quantification. This approach is visualized in Figure \ref{fig:lidar_est}.

Given the current state, $\state_t$, each valid LiDAR return was projected into an observed obstacle point $\mathbf{o}^j$:
\begin{align}
    \mathbf{o}^j =
    \begin{bmatrix}
        {r_x}_t \\
        {r_y}_t
    \end{bmatrix}
    +
    \begin{bmatrix}
        \cos(\theta_t + \beta^j) \\
        \sin(\theta_t + \beta^j)
    \end{bmatrix}
    y_t^j.
\end{align}
Then, the range and bearing to each of these points from the future state $\state_{t+i}$ was calculated as
\begin{align}
    \hat{y}^j &= \| [\mathbf{o}^j - {r_x}_{t+i} \ {r_y}_{t+i}]^T \| \\
    \hat{\beta}^j &= \atantwo(\mathbf{o}^j - [{r_x}_{t+i} \ {r_y}_{t+i}]^T) - \theta_{t+i}.
\end{align}
Since these bearing estimates do not necessarily correspond to the discrete bearing values used by the sensor, the estimated LiDAR measurements were assigned to the closest bearing:
\begin{equation}
\hat{y}_{t+i}^k = \hat{y}^j \qquad k = \argmin_{k}\{|\hat{\beta}^j-\beta^k|\}.
\end{equation}


\subsection{Simulation Experiments}

AC-PAC-NMPC was compared against four baselines across two simulation environment distributions. Unique actor and critic networks were trained for each distribution. RL training and simulations were run on a laptop with an Intel Core i-9-13900H CPU and a Nvidia GeForce RTX 4080 Max-Q GPU.

The first baseline used PAC-NMPC with a quadratic terminal cost, $\costfunc_f(\state_{t+\horizon}) = (\state_{t+\horizon}-\goalstate)^T \mathbf{Q}_f (\state_{t+\horizon}-\goalstate)$ where $\mathbf{Q}_f = diag([1 \ 1 \ 0 \ 0 \ 0])$. In the second baseline, LiDAR measurements were used to continuously build and maintain an occupancy grid of the environment using the widely adopted Nav2 framework \cite{nav2}. Each planning iteration, the A* search algorithm was run on the grid to find the shortest path to the goal. A receding horizon goal, $\state_{A^*}$, was selected at a distance of $v_{max} \cdot \horizon \cdot \Delta t$ along the path and was used to form a quadratic terminal cost, $\costfunc_f(\state_{t+\horizon}) = (\state_{t+\horizon}-\state_{A^*})^T \mathbf{Q}_f (\state_{t+\horizon}-\state_{A^*})$. The third baseline evaluated the actor policy itself, $\bpi^{\phi}(\state)$. The fourth baseline optimized trajectories with MPPI \cite{mppi}, while still using the learned value function as a terminal trajectory cost.

The following experiments used a quadratic stage cost, $\costfunc(\state_t, \inp_t) = (\state_t-\goalstate)^T \mathbf{Q} (\state_t-\goalstate)$, where $\mathbf{Q} = diag([1{\rm e}^{-2} \ 1{\rm e}^{-2} \ 0 \ 0 \ 0])$. The stage constraint, $\constraintfunc(\state_t,\inp_t,\sensor_t)\le0$, enforced bounds on the forward speed, $v \in \left[-1,3\right] \mathrm{\frac{m}{s}}$, and required the vehicle to maintain a minimum distance of $0.5 \mathrm{m}$ from all observed points from the latest LiDAR measurement. RL-based warm-starting was not used in these experiments.

PAC-NMPC optimized feedback policies over 12 timestep trajectories with $\Delta t=0.1$ sec at a replanning period of $H=0.2$ sec. The optimization used $L=5$ prior policies with $M=1024$ trajectory samples per prior  with $\delta=0.05$. The trajectory costs were normalized before optimization to achieve tighter PAC bounds. The constraint violation bound penalty was set to $\gamma=2$ in the cluttered environments and $\gamma=4$ in the concave trap environments. These were empirically determined to yield the best performance in these environments. When running MPPI, the sampled trajectory costs were similarly normalized and summed with a constraint violation indicator multiplied by $\gamma$. The same number of timesteps, $\Delta t$, $H$, and $M$ were used, along with a temperature of $\gamma_t=0.35$ and sampling variance of $\Sigma_\epsilon=0.01$. Both PAC-NMPC and MPPI were allowed to run for as many iterations as possible within the replanning period and the resulting policies were interpolated to 50Hz.

\subsubsection{Cluttered Environments}

The first distribution of environments consisted of randomly placed circular obstacles with random radii. These obstacles were allowed to overlap, which allowed the constraint regions to combine to form complex environments. 100 testing environments were sampled from this distribution. Trivial environments in which no obstacles blocked the path to the goal were discarded.

\begin{table}[t]
\centering
\begin{tabular}{|c | c || c| c| c|}
\hline
\multicolumn{2}{|c||}{Approach} & Success & Stuck & Violation \\
\hline
\hline
\multicolumn{2}{|c||}{RL Actor Network} & $90\%$ & $\mathbf{3\%}$ & $7\%$  \\ 
\hline
\hline
\multicolumn{2}{|c||}{MPPI w/ Learned $\approxvaluefunc$} & $70\%$ & $7\%$ & $23\%$  \\ 
\hline
\hline
\multirow{3}{*}{PAC-NMPC} & Quad. Term. Cost & $76\%$ & $24\%$ & $\mathbf{0\%}$ \\
\cline{2-5}
 & Map \& A* & 89\% & 4\% & 7\% \\
\cline{2-5}
 & Learned $\approxvaluefunc$ & $\mathbf{97\%}$ & $\mathbf{3\%}$ & $\mathbf{0\%}$ \\
\hline
\end{tabular}
\vspace{2mm}
\caption{Simulated cluttered environments results.}
\label{table:cluttered-sim}
\end{table}

\begin{figure}[t]
    \centering
    \includegraphics[trim={185 70 180 120},clip,width=1.0\columnwidth]{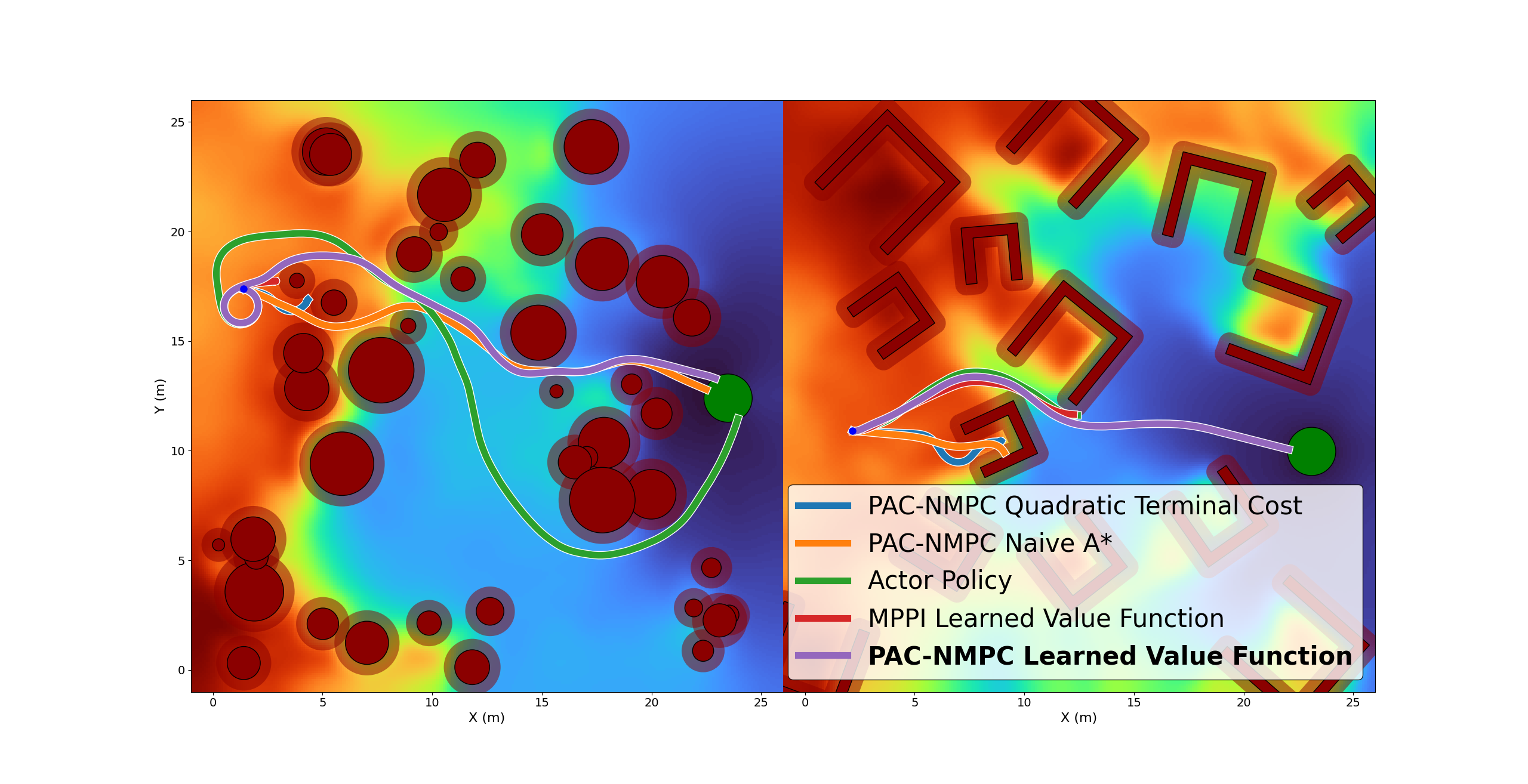}
    \caption{Cluttered and trap environment examples. Slices of the learned value functions are visualized as heatmaps.}
    \label{fig:rallycar-sim-example}
\end{figure}

Results are shown in Table \ref{table:cluttered-sim}, with an example environment in Figure \ref{fig:rallycar-sim-example}. When using a quadratic terminal cost, the system often got caught in local minima. When using A*, the system occasionally planned paths through obscured obstacles, which caused constraint violations if the LiDAR was unable to view the obscured obstacles until the system was too close to recover. The actor policy was unable to explicitly enforce constraints which resulted in occasional violations. Our approach outperformed all baselines and never violated the constraints.

\begin{figure}[t]
    \centering
    \includegraphics[trim={70 10 90 80},clip,width=0.8\columnwidth]{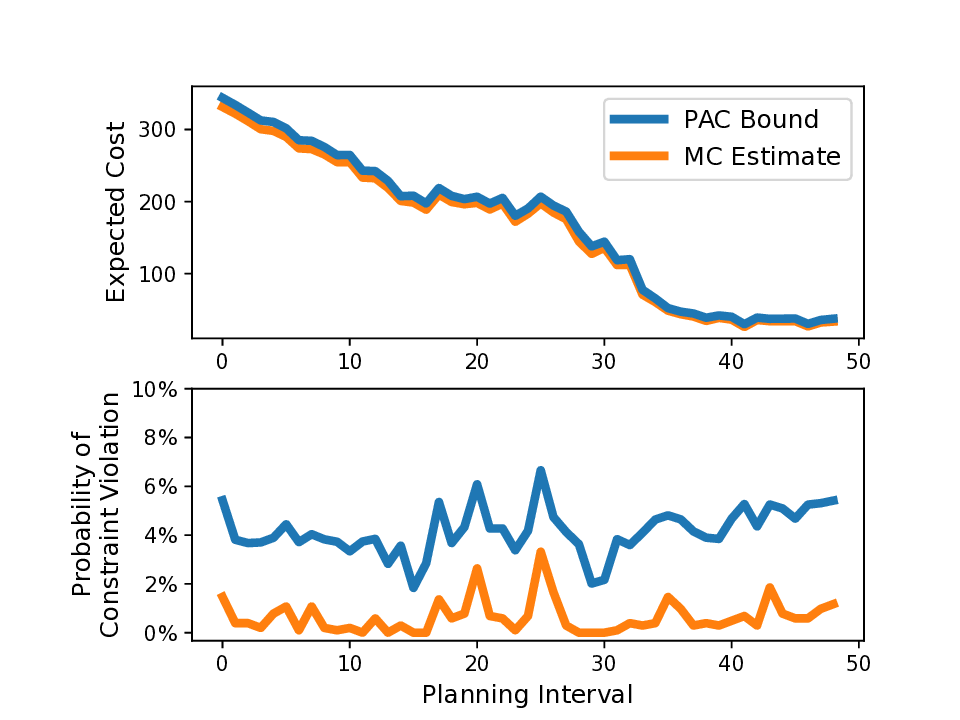}
    \caption{PAC Bounds \& Monte Carlo estimates.}
    \label{fig:pacbounds}
\end{figure}

Figure \ref{fig:pacbounds} displays a plot of the optimized PAC bounds at each planning interval for an entire example trial when using the learned value function as a terminal cost and constraint. The probability of collision remains non-zero throughout the trial due to dynamics uncertainty and increases as the system gets closer to obstacles. To validate PAC bounds, the bounds were compared against Monte Carlo estimates of the expected cost and probability of constraint violation, which were formed by sampling 1024 trajectories and dropout masks. On average, PAC-NMPC produced guarantees that the probability of constraint violation of the NMPC generated policies would be less than 5\%. 

To demonstrate that this approach incorporated uncertainty from the actor and critic networks into the PAC bound computation, the percentage of bound violations was evaluated when optimizing the bounds with and without sampling dropout masks. In both cases, these were compared against Monte Carlo estimates using 1024 sampled trajectories and dropout masks. Across all trials, when optimizing with sampled dropout masks, the expected cost bound was never violated, and the probability of constraint violation bound was violated in only 0.34\% of planning intervals. When optimizing without sampled dropout masks, the expected cost bound was violated in 65.49\% of planning intervals, and the probability of constraint violation bound was violated in 1.45\%.

\subsubsection{Concave Trap Environments}

\begin{table}[t]
\centering
\begin{tabular}{|c | c || c| c| c|}
\hline
\multicolumn{2}{|c||}{Approach} & Success & Stuck & Violation \\
\hline
\hline
\multicolumn{2}{|c||}{RL Actor Network} & $82\%$ & $\mathbf{2\%}$ & $16\%$  \\ 
\hline
\hline
\multicolumn{2}{|c||}{MPPI w/ Learned $\approxvaluefunc$} & $55\%$ & $9\%$ & $36\%$  \\ 
\hline
\hline
\multirow{3}{*}{PAC-NMPC} & Quad. Term. Cost & $47\%$ & $53\%$ & $\mathbf{0\%}$ \\
\cline{2-5}
 & Map \& A* & 91\% & 4\% & 5\% \\
\cline{2-5}
 & Learned $\approxvaluefunc$ & $\mathbf{93\%}$ & $7\%$ & $\mathbf{0\%}$ \\
\hline
\end{tabular}
\vspace{2mm}
\caption{Simulated trap environments results.}
\label{table:trap-sim}
\end{table}

The second distribution of environments consisted of concave traps to highlight the ability of AC-PAC-NMPC to safely avoid local minima. The number of obstacles, side lengths of each obstacle, and pose of each obstacle was randomly sampled. The angles of the obstacles were sampled uniformly such that they were pointing towards the starting position of the robot $\pm\frac{\pi}{2}$ radians. 100 environments were sampled from the distribution and used to evaluate AC-PAC-NMPC, which outperformed all baselines and never violated the constraints (Table \ref{table:trap-sim}). An example environment is shown in Figure \ref{fig:rallycar-sim-example}.


\subsection{Hardware experiments}

\begin{figure}[t]
\centering
\includegraphics[trim={0 0 0 0},clip,width=\columnwidth]{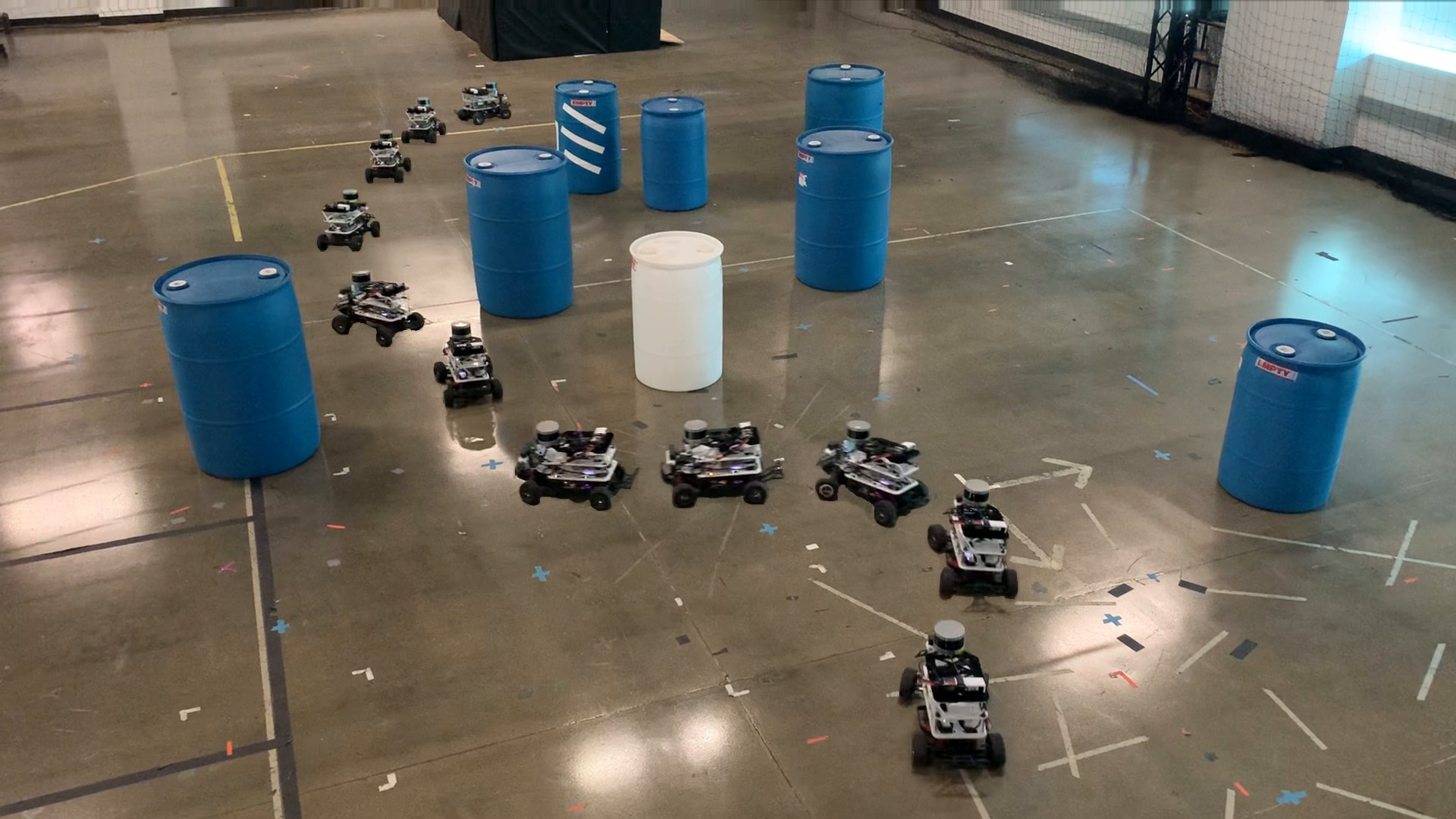}
\caption{Hardware environment example.}
\label{fig:rallycar-hardware-env}
\end{figure}

\begin{table}[t]
\setlength{\tabcolsep}{3pt} 
\centering
\begin{tabular}{|c || c| c| c|}
\hline
Approach & Success & Stuck & Violation \\
\hline
\hline
Actor Policy & $85\%$ & $\mathbf{0\%}$ & $15\%$ \\
\hline
PAC-NMPC w/ Learned $\approxvaluefunc$ & $\mathbf{95\%}$ & $5\%$ & $\mathbf{0\%}$ \\
\hline
\hline
Actor Policy w/ Mismatch & $70\%$ & $\mathbf{0\%}$ & $30\%$ \\
\hline
PAC-NMPC w/ Learned $\approxvaluefunc$ w/ Mismatch & $80\%$ & $20\%$ & $\mathbf{0\%}$ \\
\hline
\end{tabular}
\vspace{2mm}
\caption{Hardware environments results.}
\label{table:rallycar-hardware-results}
\end{table}

AC-PAC-NMPC was evaluated on a 1/10\textsuperscript{th} scale Traxxas Rally Car platform with a Velodyne Puck LITE LiDAR. An OptiTrack Motion Capture system was used for state estimation. 20 random environments consisting of a maximum of 8 circular obstacles in an 8 meter by 6 meter space were generated. Environments in which obstacles overlapped or in which no obstacles were blocking the path to the goal were discarded. An example environment is shown in Figure \ref{fig:rallycar-hardware-env}.

The value function used in these hardware experiments was trained entirely in the cluttered environment simulation. In contrast to the simulated LiDAR, the Velodyne Puck LITE LiDAR has noise and returns range measurements to the walls. Thus, these environments are outside of the training distribution. Additionally, a value function was also trained with an incorrect wheelbase, $0.5 \mathrm{m}$, to demonstrate the utility of this approach even in the presence of model mismatch.

AC-PAC-NMPC outperformed the actor policy and never collided with obstacles (Table \ref{table:rallycar-hardware-results}). This indicates that utilizing the RL models inside PAC-NMPC provided better robustness for the sim-to-real transfer.
When using the learned value function trained with the incorrect wheelbase, but using the correct wheelbase when sampling trajectories, AC-PAC-NMPC outperformed the actor policy and never collided with obstacles. Thus, it was able to safely utilize an RL model trained in the presence of model mismatch.


\section{Fixed-Wing UAV with Depth Camera: Approach and Evaluation}
\label{section:fixedwing}

In this section, we apply AC-PAC-NMPC to a 32-inch wing-span fixed-wing aerial vehicle equipped with a depth camera (Fig. \ref{fig:fixedwing}). This system has significantly more complex dynamics and sensing than the rally car in Section \ref{section:rallycar}. Consequently, the novel algorithmic contributions presented in Section \ref{section:rlaugmentedpacnmpc}, including the RL-based warm start, learned sensor prediction model, and separate value function improvement PAC bound, are critical for enabling the successful navigation.


\subsection{Fixed-wing Dynamics and Depth Camera Sensor}

We use the dynamics model proposed in \cite{agile-fixed-wing} with a nonunit-quaternion orientation representation \cite{nonunit-quaternions}.

The state and control are defined as 
\begin{align}
    \state_t = \left[ \mathbf{r}^T \ \mathbf{q}^T \ \mathbf{v}^T \ \boldsymbol{\omega}^T \ \boldsymbol{\delta_s}^T \ \delta_{th} \right]^T \quad \inp_t = \left[ \mathbf{u_s}^T \ u_{th} \right]^T
\end{align}
where $\mathbf{r} \in \mathbb{R}^3$ is the position of the center of mass in the world frame, $\mathbf{q} \in \mathbb{H}$ is the orientation, $\mathbf{v} \in \mathbb{R}^3$ is the  linear velocity in the world frame, $\boldsymbol{\omega} \in \mathbb{R}^3$ is the angular velocity in the body frame, $\boldsymbol{\delta_s} = \left[\delta_a, \delta_e, \delta_r \right] \in \mathcal{D} \subset \mathbb{R}^3$ are the control surface angles, $\delta_{th} \in \mathbb{R}$ is the propeller thrust magnitude, $ \mathbf{u_s} \in \mathbb{R}^3$ are the control surface angular rates, and $u_{th} \in \mathbb{R}$ is the throttle. The continuous-time, nominal dynamics are defined as
{
\setlength{\belowdisplayskip}{0pt}
\setlength{\belowdisplayshortskip}{0pt}
\begin{align}
\dot{\state}_t = \nominaldynamicsfunc(\state_t, \inp_t) = \left[ \dot{\mathbf{r}}^T \ \dot{\mathbf{q}}^T \ \dot{\mathbf{v}}^T \ \dot{\boldsymbol{\omega}}^T \ \dot{\boldsymbol{\delta}}_s^T \ \dot{\delta}_{th} \right]^T \nonumber
\end{align}
}
{
\setlength{\abovedisplayskip}{0pt}
\setlength{\abovedisplayshortskip}{0pt}
\begin{align}
\mathbf{\dot{r}}
&= \mathbf{v}
&
\boldsymbol{\dot{\omega}}
&= \mathbf{J}^{-1}(\mathbf{m}-\boldsymbol{\omega}\times \mathbf{J}\boldsymbol{\omega})
\notag\\
\mathbf{\dot{q}}
&= \frac{1}{2}\boldsymbol{\Omega}(\boldsymbol{\omega})\mathbf{q}
+ 0.1(1-\mathbf{q}^T\mathbf{q})\mathbf{q}
&
\boldsymbol{\dot{\delta}_s}
&= \mathbf{u_s}
\notag\\
\mathbf{\dot{v}}
&= \mathbf{R}(\boldsymbol{\mathbf{q}})\mathbf{f}/m
&
\dot{\delta}_{th}
&= a\delta_{th}+bu_{th}
\end{align}
}
where $\mathbf{R}(\boldsymbol{\mathbf{q}}) \in SO(3)$ is the rotation matrix, $m$ is the mass, $\mathbf{J}$ is the inertia, $a$ and $b$ define the thrust dynamics, and
\begin{equation}
\setlength{\arraycolsep}{3pt}
\boldsymbol{\Omega}(\boldsymbol{\omega}) =
\begin{bmatrix}
    0          & -\omega_0 & -\omega_1 & -\omega_2 \\
    \omega_0   & 0         & \omega_2  & -\omega_1 \\
    \omega_1   & -\omega_2 & 0         & \omega_0 \\
    \omega_2   & -\omega_1 & -\omega_0 & 0
\end{bmatrix}. \nonumber
\end{equation}
The forces acting on the fixed-wing in the body-fixed frame can be written as
\begin{equation}
    \mathbf{f} = \sum_i\left(\mathbf{R}_{s_i}\mathbf{f}_{s_i}\right) + \delta_{th}\mathbf{R}_t\mathbf{e}_x - mg\mathbf{R}(\mathbf{q})^T\mathbf{e}_z +\mathbf{f}_d
\end{equation}
where $\mathbf{R}_{s_i}, \ \mathbf{R}_t \in SO(3)$ are the rotation of the aerodynamic surfaces and the thrust source, respectively, with respect to the body-fixed frame, $\mathbf{e}_x, \ \mathbf{e}_z$ are unit vectors in the $x$ and $z$ directions, respectively, $g$ is gravity, $\mathbf{f}_{s_i}$ is the force from each aerodynamic surface, and $\mathbf{f}_d$ is the drag force.

The forces from the $i^{th}$ aerodynamic surface are modeled using the flat plate model in \cite{flatplate}:
\begin{equation}
\mathbf{f}_{s_i}(\state_t, \inp_t) = \rho S_i||\mathbf{v}_{s_i}(\state_t, \inp_t)||\left(\mathbf{v}_{s_i}(\state_t, \inp_t) \cdot \mathbf{e}_z \right)\mathbf{e}_z
\end{equation}
where $\rho$ is the air density, $S_i$ is the surface area, and $\mathbf{v}_{s_i}$ is the surface velocity given as 
\begin{equation}
\mathbf{v}_{s_i} = \mathbf{R}_{s_i}^T(\mathbf{v_b}+\boldsymbol{\omega}\times\mathbf{r}_{h_i} + \gamma_i\mathbf{v}_{bw}) +(\mathbf{R}_{s_i}^T\boldsymbol{\omega}+\boldsymbol{\omega}_{s_i}) \times \mathbf{r}_{s_i}.
\end{equation}
Here, $\mathbf{v_b}=\mathbf{R}(\mathbf{q})^T\mathbf{v}$ is the linear velocity in the body-fixed frame, $\mathbf{r}_{h_i}$ is displacement from the center of mass to the stationary point on the surface with respect to the body frame origin, $\mathbf{r}_{s_i}$ is the displacement from the hinge point to the surface center of pressure, and $\boldsymbol{\omega}_{s_i}$ is the commanded surface rotation rate, and $\gamma_i$ is an empirically determined backwash coefficient. $\mathbf{v}_{bw}$ is the backwash velocity from the propeller, which is approximated using actuator disk theory
\begin{equation}
\mathbf{v}_{bw} = \left[\sqrt{\|\mathbf{v_p}\|^2+\frac{2\delta_{th}}{\rho S_{disk}}}-\|\mathbf{v}_p\|\right]\mathbf{e}_x
\end{equation}
where $\mathbf{v_p}$ is the freestream velocity at the propeller and $S_{disk}$ is the area of the actuator disk.
The drag force is modeled as 
\begin{equation}
\mathbf{f}_d = -\frac{1}{2}C_{b_d}\rho\|\mathbf{v}_b\|\mathbf{v}_b
\end{equation}
where $C_{b_d}$ is determined empirically.

The moment in the body-fixed frame, $\mathbf{m}$, is given as
\begin{align}
\mathbf{m} = \sum_i\left(\left(\mathbf{r}_{h_i}+\mathbf{R}_{s_i}\mathbf{r}_{s_i}\right) \times \mathbf{R}_{s_i}\mathbf{f}_{s_i} \right)
\end{align}



The stochastic, discrete-time dynamics applies a second-order Runge-Kutta (RK2) integration and Gaussian process noise, fit from hardware flight data (Fig. \ref{fig:histogram}).

\begin{figure}[t]
    \centering
    \includegraphics[trim={0 500 200 600},clip,width=1.0\columnwidth]{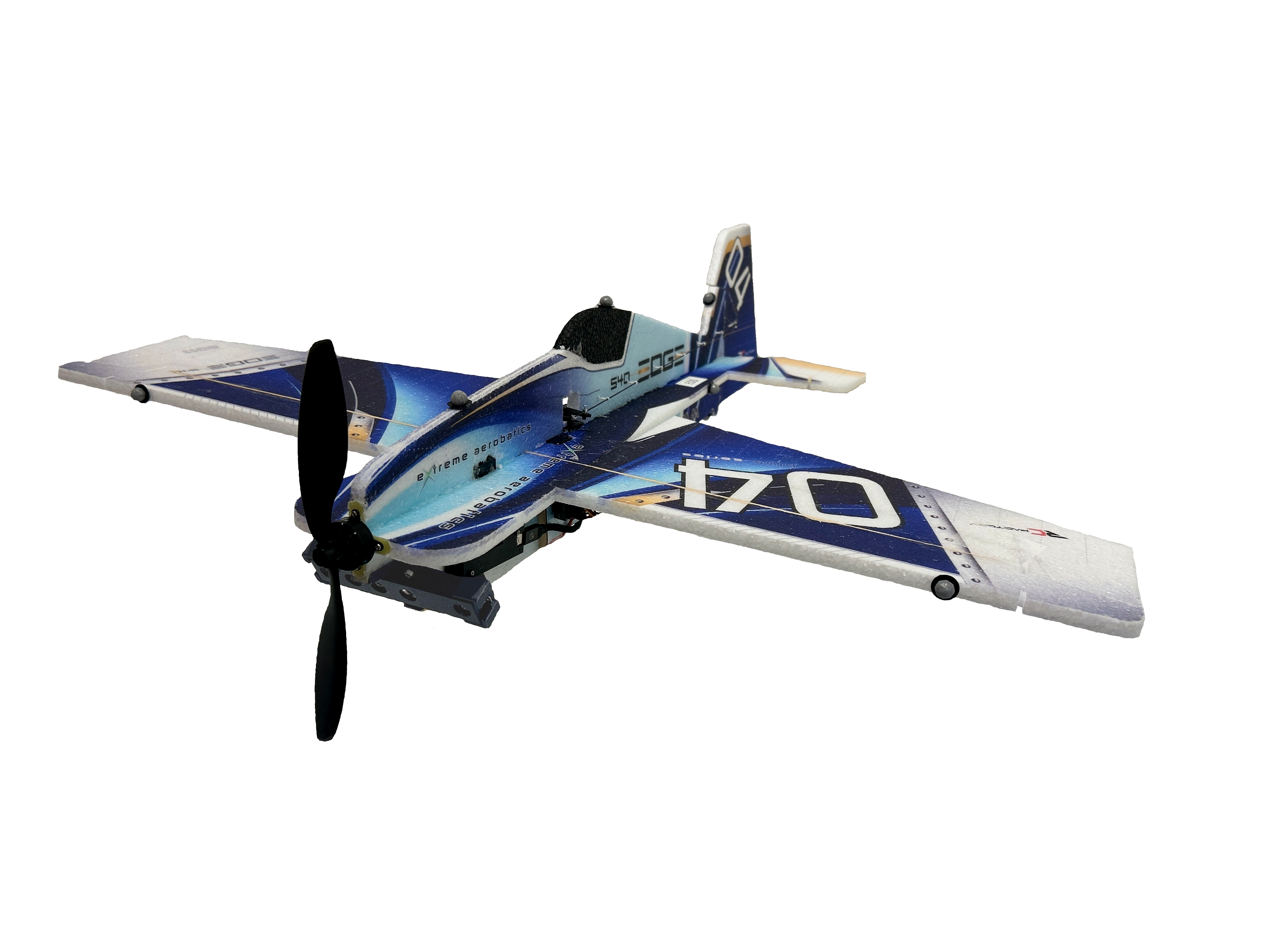}
    \caption{Edge540 aerial vehicle with RealSense D450 depth camera.}
    \label{fig:fixedwing}
\end{figure}

\begin{figure}[t]
    \centering
    \includegraphics[trim={0 0 0 0},clip,width=1.0\columnwidth]{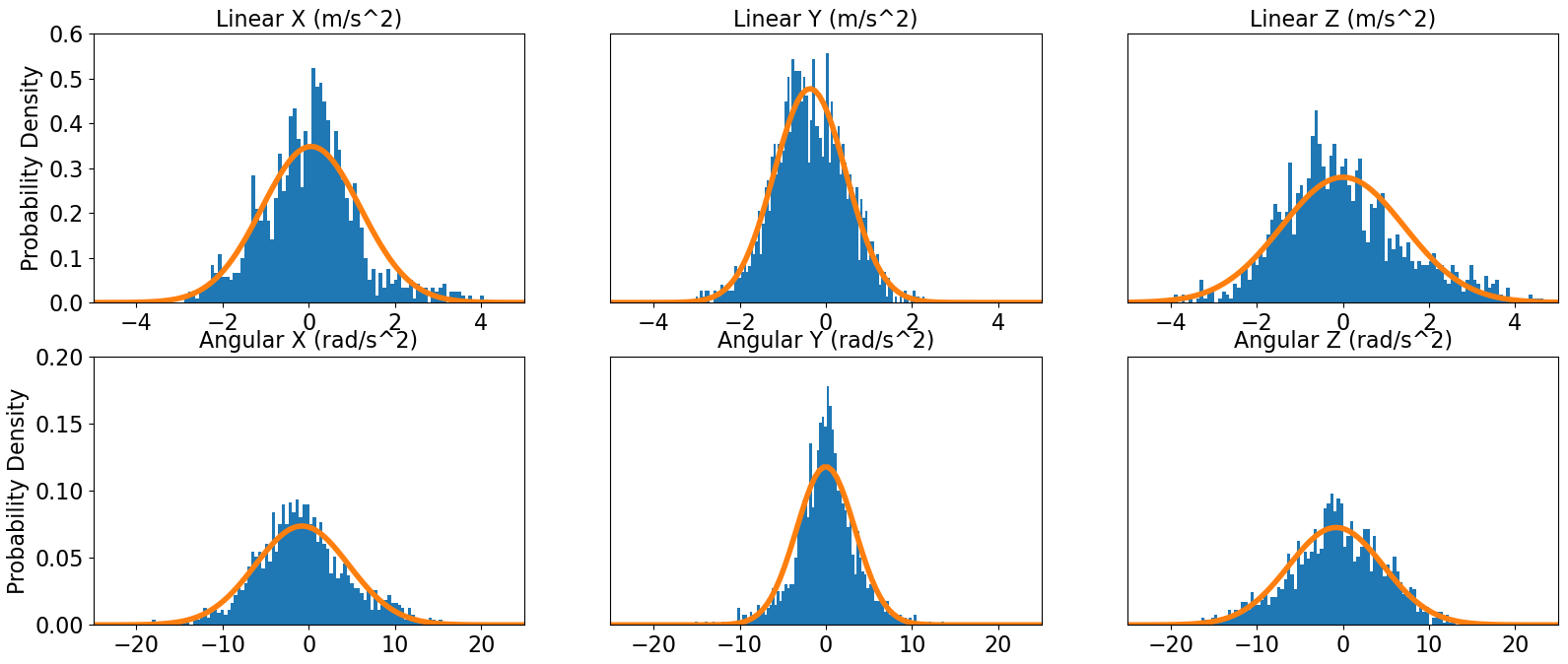}
    \caption{Histogram of fixed-wing aerial vehicle model acceleration errors. Fitted Gaussian distribution shown in orange.}
    \label{fig:histogram}
\end{figure}

The fixed-wing is equipped with a depth camera, which has an $87^\circ \times 58^\circ$ field of view. The sensor measurement, $\sensor_t$, is a flattened depth image.


\subsection{Actor Critic Training}

\label{subsection:depth-sensor-prediction}



\begin{figure*}[t]
    \centering
    \begin{subfigure}[b]{0.19\textwidth}
        \centering
        \includegraphics[width=\linewidth]{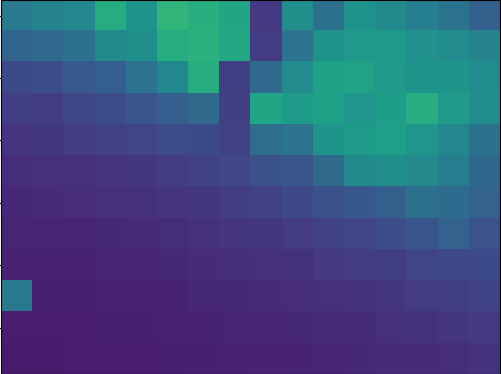}
        \caption{Current Input}
    \end{subfigure}\hfill
    \begin{subfigure}[b]{0.19\textwidth}
        \centering
        \includegraphics[width=\linewidth]{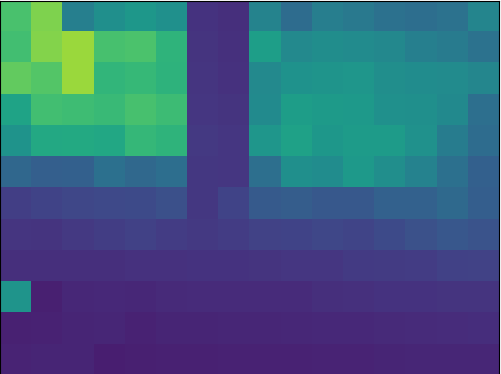}
        \caption{Future Target}
    \end{subfigure}\hfill
    \begin{subfigure}[b]{0.19\textwidth}
        \centering
        \includegraphics[width=\linewidth]{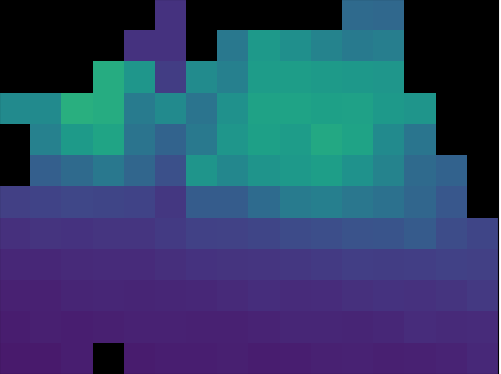}
        \caption{Geometric Prediction}
    \end{subfigure}\hfill
    \begin{subfigure}[b]{0.19\textwidth}
        \centering
        \includegraphics[width=\linewidth]{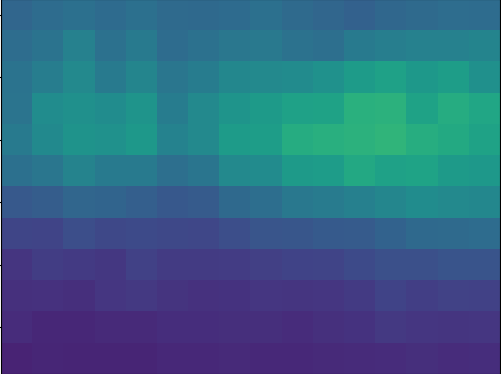}
        \caption{Fully Connected}
    \end{subfigure}\hfill
    \begin{subfigure}[b]{0.19\textwidth}
        \centering
        \includegraphics[width=\linewidth]{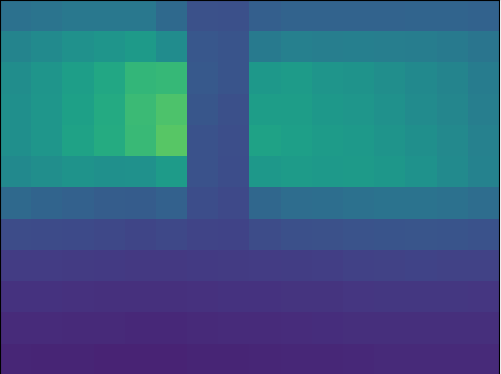}
        \caption{U-Net}
    \end{subfigure}

    \caption{Image prediction on decimated RealSense D450 depth image collected during fixed-wing flight. Depth is visualized as blue (near) to yellow (far).}
    \label{fig:image-prediction-models}
\end{figure*}

Similarly to Section \ref{subsection:rallycar-actor-critic}, the actor and critic were trained using the CleanRL \cite{cleanrl} implementation of TD3 \cite{td3}. The dynamics and sensor model previously described were used for training, and the stage cost was summed with a constraint violation penalty, $\gamma_c=1500$. The actor and critic, $\actor(\obs_t)$, $\critic(\obs_t, \inp_t)$ were parameterized identically as fully connected neural networks with two hidden layers, 256 neurons per layer, 10\% dropout, and ReLU activation functions.

The depth camera sensor measurement, $\sensor_t$, was limited to a resolution of $16 \times 12$ pixels. This low resolution enables rapid inference of the actor, critic, and sensor prediction models. Additionally, given the simplicity of these inputs compared to a high-resolution image, the learned models converge faster and generalize better. While this resolution may limit the ability of the system to detect thin obstacles, we found it sufficient for navigation in our targeted environments, where obstacles are $0.2$ to $0.4$ meters in radius.

To aid in efficient learning, the observations were preprocessed for the networks. The network inputs consisted of the z position, flattened rotation matrix, control surface angles, body-frame linear and angular velocities, range to goal, cosine and sine of the azimuth and elevation angles to the goal, and flattened depth image. These inputs were all normalized.

\subsection{Sensor Prediction}

As discussed in Section \ref{section:rlaugmentedpacnmpc}, estimates of future sensor measurements, $\sensorpredict_t$, must be obtained to warm-start PAC-NMPC with the actor policy and to evaluate the value function at the terminal states of sampled trajectories. We first evaluated image prediction through geometric projection of current measurements to future states, as was done for LiDAR measurement prediction for the rally car platform. 

First, the depth image was projected into 3D space, then assigned to pixels in the camera view at the future state. We evaluated both nearest pixel assignment and bilinear splatting. We found that this approach did not yield consistently accurate predictions and often left many pixels unassigned, making it unsuitable for evaluating the actor and critic at future states along sampled trajectories. Thus, we propose training a model to predict future state measurements with supervised learning.

We generated a training dataset entirely from simulation using the RL actor policy. The actor policy controlled the system through randomly sampled environments, yielding a sequence of states, ($\state_0$, $\state_1$, $\cdots$), and sensor measurements ($\sensor_0$, $\sensor_1$, $\cdots$). For each timestep $t$ in each of these environments, we added to the dataset the model inputs ($\state_t, \sensor_t,\state_{t+k}$) and targets ($\sensor_{t+k}$) for all $k \in [0, \cdots, \horizon]$. This produced a dataset that was well suited to predicting future sensor measurements along the entire trajectory horizon. 

We then trained and evaluated fully connected and U-Net convolutional \cite{unet} networks and found that the U-Net architecture yielded superior results (Table \ref{table:image-prediction-models}, Fig. \ref{fig:image-prediction-models}). These networks were trained on a smooth L1 loss plus an SSIM Loss \cite{ssim} and were evaluated on a 10\% held-out validation set.

\begin{table}[t]
\centering
\setlength{\tabcolsep}{3pt} 
\begin{tabular}{|c || c|| c| c| c| c| c|}
\hline
Model & FLOPs (M) & $\delta_1$ & $\delta_2$ & $\delta_3$ & REL & RMSE \\
\hline
\hline
Geometric (Nearest) & N/A & 0.133 & 0.217 & 0.306 & 2.838 & 0.634 \\
\hline
Geometric (Splat) & N/A & 0.186 & 0.309 & 0.419 & 2.240 & 0.548 \\
\hline
\hline
Fully Connected & 36.798 & 0.848 & 0.935 & 0.963 & 0.170 & 0.081 \\
\hline
U-Net & 42.283 & 0.923 & 0.965 & 0.983 & 0.087 & 0.058 \\
\hline
\end{tabular}
\vspace{2mm}
\caption{Performance of depth image prediction models on simulated data. $\delta_i$ is the percentage of pixels satisfying $\max(\frac{\hat d}{d},\frac{d}{\hat d})<1.25^i$ where $\hat d$ and $d$ are predicted and ground-truth depths. REL and RMSE are the mean absolute relative error and root mean squared error respectively.}
\label{table:image-prediction-models}
\end{table}

The fully connected network had two hidden layers, 4096 neurons per layer, SiLU activation functions, and 20\% dropout. This had a similar computational complexity, measured in FLOPs, as the U-Net architecture that we evaluated. 

The U-Net architecture had 3 encoder-decoder stages. The input depth image, $\sensor_t$, was normalized and passed through an initial convolutional layer which increases the number of image channels. We found 16 initial channels to produce adequate performance while allowing us to inference the network fast enough. Each stage consists of a convolutional layer, an RMS norm, a SiLU activation function, and 20\% dropout. The number of image channels is doubled and the resolution is halved with each stage. 

The initial and future states $\state_t$, $\state_{t+k}$ enter through the bottleneck of the network. First, the $SE(3)$ transform from the initial to future state, represented as a translation vector and rotation matrix, is computed and flattened. This transform is passed through a fully connected network with two hidden layers, 256 neurons, and SiLU activation functions. The resulting features are used to scale and shift the intermediate feature maps at the bottleneck of the network \cite{film}, thus conditioning the image prediction on the state transform. As is done with the actor and critic networks, we implement $\sensorpredictfunc$ as a stochastic model using MC dropout.

\subsection{Simulation Experiments}
The following experiments evaluated AC-PAC-NMPC on the fixed-wing aerial vehicle with a depth camera in challenging simulation environments. We compared our proposed approach against several baselines, performed an ablation study of the major novel algorithmic contributions, and evaluated our approach outside of the RL training distribution.

\subsubsection{Controller Configuration}

We used a quadratic stage cost
\begin{align}
\costfunc&(\state_t, \inp_t) = (\state_t-\goalstate)^T\mathbf{Q}(\state_t-\goalstate) + \inp_t^T\mathbf{R}\inp_t \nonumber \\
\mathbf{Q} &= 0.1 \cdot
\mathrm{diag}(
[1\ 1\ 1\ 0\ 0\ 0\ 0\ 1\ 1\ 1\ 0\ 0\ 0\ 1\ 1\ 1\ 0]) \nonumber\\
\mathbf{R} &= 0.01 \cdot \mathrm{diag}([1\ 1\ 1\ 1]).
\end{align}

The stage constraint, $\constraintfunc(\state_t,\inp_t,\sensor_{(t-\sensorhist): t})\le0$, enforced bounds on the speed, $\|\mathbf{v}\| < 7.0 \mathrm{\frac{m}{s}}$, the z position, $r_z \in [1.25, 3.25]$ and required the vehicle to maintain a minimum distance of $0.6 \mathrm{m}$ from all observed obstacle points from the latest depth images. Since the depth camera has a limited field of view, recently observed obstacles may leave the image as the aerial vehicle performs aerobatic maneuvers. To account for this, the  obstacle constraint utilizes a history of depth images. If the queried state lies outside of the latest depth image field of view, an image from $0.1$ sec prior is queried. This continues for a history of up to $\sensorhist =10$ prior depth images.

We optimized feedback policies over 11 timestep trajectories with $\Delta t=0.1$ sec at a replanning period of $H=0.1$ sec. In contrast to the rally car's configuration, we replan at every integration timestep. We found that this increased replanning rate allowed for improved reactivity to obscured obstacles given the increased vehicle speed and limited camera field of view. The optimization used $L=1$ prior policies, $M=1024$ trajectory samples per prior, and $\delta=0.05$. The trajectory costs were normalized before optimization to achieve tighter PAC bounds. The constraint violation bound penalty was set to $\gamma=2$ and the allowable threshold to $\epsilon_c = 0.1$.

\subsubsection{Environments}

\label{subsubsection:fixed-wing-environments}

We generated a distribution of environments consisting of randomly placed cylindrical obstacles in a rectangular room. The room width, length, and height were uniformly sampled between [10, 30], [20, 30], and [4, 5] meters, respectively. These obstacles were arranged in randomly placed semi-circle arrangements that create concave traps. These traps create challenging environments for the fixed-wing aerial vehicle, which is unable to reliably stop and turn around when caught in the traps. Consequently, successful navigation often requires long-range reasoning beyond the finite MPC horizon. The number of semi-circle traps was randomly sampled between 0 and 6, and were oriented towards the initial position of the plane, $\pm \frac{\pi}{2}$. The obstacle radii were sampled between 0.2 and 0.4 meters. Obstacles were not allowed to overlap. The simulated depth camera included observations of the cylindrical obstacles, walls, ceiling, and floor. An example environment is shown in Figure \ref{fig:fixedwing-sim-environment}.

\begin{figure}[t]
    \centering
    \includegraphics[trim={0 0 0 0},clip,width=1.0\columnwidth]{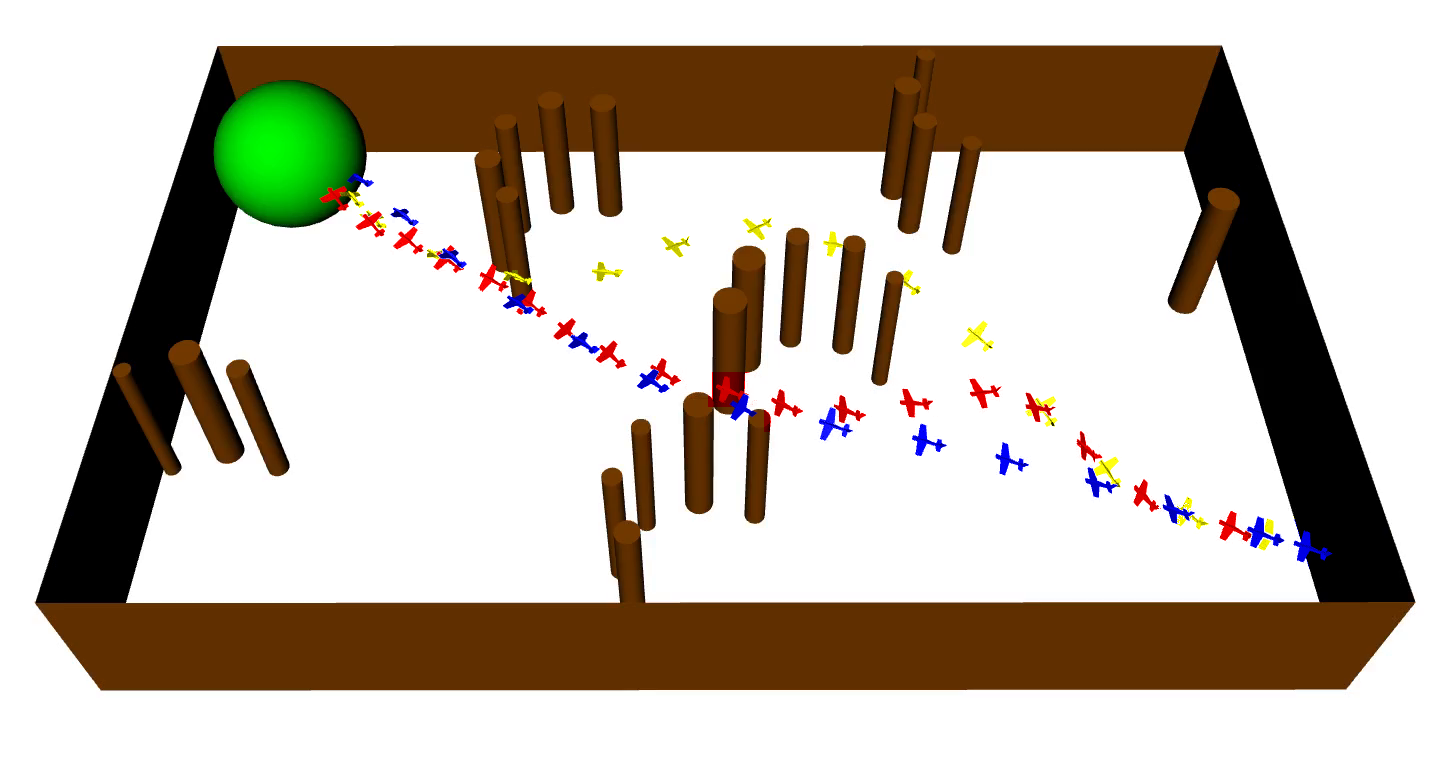}
    \caption{Simulation environment example. Ceiling/floor not shown. RL actor in yellow, PAC-NMPC with global planner in red, AC-PAC-NMPC in blue.}
    \label{fig:fixedwing-sim-environment}
\end{figure}

\subsubsection{Baseline Comparison}

We compared our approach against several baselines in these simulated environments. First, we compared against the RL actor policy itself, which was trained on this distribution of environments. Then, we compare against a sequence of increasingly complex variations of PAC-NMPC, each introducing an additional capability. These baselines isolate and compare the contributions of mapping, global planning, and unknown space penalties against uncertainty-aware RL augmentation. 

First, we evaluated PAC-NMPC using a history of depth images for its constraint and a quadratic terminal cost:
\begin{align}
\costfunc_f&(\state_t, \inp_t) =(\state_t-\goalstate)^T\mathbf{Q}(\state_t-\goalstate) \nonumber \\
&\hphantom{(\state_t, \inp_t)}+ (\boldsymbol{\eta}_t-\boldsymbol{\eta}_G)^T\mathbf{Q}_\eta(\boldsymbol{\eta}_t-\boldsymbol{\eta}_G) \nonumber \\
&\hphantom{(\state_t, \inp_t)}+ (\mathbf{v}_t-\mathbf{v}_G)^T\mathbf{Q}_v(\mathbf{v}_t-\mathbf{v}_G), \nonumber \\
\mathbf{d}_t &= \mathbf{r}_G - \mathbf{r}_t, \quad
\boldsymbol{\eta}_G = \mathrm{eul}\left(\mathbf{d}_t\right), \quad
\mathbf{v}_G = v_G\cdot\mathbf{d}_t/\|\mathbf{d}_t\|, \nonumber \\
\mathbf{Q} &=\mathrm{diag}([1\ 1\ 1\ 0\ 0\ 0\ 0\ 1\ 1\ 1\ 0\ 0\ 0\ 0.1\ 0.1\ 0.1\ 0]), \nonumber \\
\mathbf{Q}_\eta &= 10 \cdot \mathrm{diag}([1\ 1\ 1]), \quad \mathbf{Q}_v = 0.1 \cdot \mathrm{diag}([1\ 1\ 1]).
\end{align}
Here, $\boldsymbol{\eta}_t$, $\mathbf{v}_t$ are the Euler angles and linear world velocity of the aerial vehicle at state $\state_t$. The desired Euler angles, $\boldsymbol{\eta}_G$, and velocity, $\mathbf{v}_G$, are constructed from the vector between the robot and the goal, $\mathbf{d}_t$, where the target speed is set to $v_G = 6\mathrm{\frac{m}{s}}$.

To improve awareness of previously observed obstacles, the next baseline maintains a map of the environment. We maintained an occupancy voxel grid, which was constructed and maintained using the the widely adopted Nav2 framework \cite{nav2}. The voxel grid had a resolution of $0.1\mathrm{m} \times0.1\mathrm{m}\times0.1\mathrm{m}$ and was used in place of the depth image history for the obstacle constraint function.

To improve the robot's ability to avoid local traps in the environment, we incorporated a global planner into the next baseline. We implemented a 3-dimensional A* global planner. Each planning iteration, the A* search algorithm was run on the voxel grid to find the shortest path to the goal. These paths were pruned, smoothed, and parameterized by velocity, as was done in \cite{agile-fixed-wing}. Target velocity along the path is decreased as the path curvature increases: straight segments were assigned $6\mathrm{\frac{m}{s}}$ and segments with the maximum curvature were assigned $3\mathrm{\frac{m}{s}}$. This encourages the aerial vehicle to slow down, resulting in a tighter turning radius, as the curvature of the path increases. A receding horizon goal, $\state_{A^*}$ was selected at a time horizon of $1$ sec along the path to form a quadratic terminal cost,
\begin{align}
\costfunc_f&(\state_t, \inp_t) =(\state_t-\state_{A^*})^T\mathbf{Q}(\state_t-\state_{A^*}) \nonumber \\
&\hphantom{(\state_t, \inp_t)}+(\boldsymbol{\eta}_t-\boldsymbol{\eta}_{A^*})^T\mathbf{Q}_\eta(\boldsymbol{\eta}_t-\boldsymbol{\eta}_{A^*}) \nonumber \\
&\hphantom{(\state_t, \inp_t)}+ (\mathbf{v}_t-\mathbf{v}_{A^*})^T\mathbf{Q}_v(\mathbf{v}_t-\mathbf{v}_{A^*}), \nonumber \\
\mathbf{Q} &=
\mathrm{diag}(
[10\ 10\ 10\ 0\ 0\ 0\ 0\ 1\ 1\ 1\ 0\ 0\ 0\ 0.1\ 0.1\ 0.1\ 0]), \nonumber \\
\mathbf{Q}_\eta &= 10 \cdot \mathrm{diag}([1\ 1\ 1]), \quad \mathbf{Q}_v = \mathrm{diag}([1\ 1\ 1]),
\end{align}
where $\boldsymbol{\eta}_{A^*}$ and $\mathbf{v}_{A^*}$ are the Euler angles and linear world velocity of the receding horizon goal as parameterized by the smoothed A* global path.

\begin{table}[t]
\centering
\begin{tabular}{|c | c || c| c|}
\hline
\multicolumn{2}{|c||}{Approach} & Success & Cost \\
\hline
\hline
\multicolumn{2}{|c||}{RL Actor Network} & 81\% & \textbf{468.1} \\ 
\hline
\hline
\multirow{5}{*}{PAC-NMPC} & Depth Image History & 52\% & 757.7 \\
\cline{2-4}
 & Map & 63\% & 810.2 \\
\cline{2-4}
 & Map \& A* & 74\% & 738.8 \\
\cline{2-4}
 & Map \& A* Unknown & 85\% & 718.14 \\
\cline{2-4}
 & Actor-Critic (ours) & \textbf{90\%} & 513.3 \\
\hline
\end{tabular}
\vspace{2mm}
\caption{Baseline Comparison Results}
\label{table:fixedwing-sim-baselines}
\end{table}

To improve the robot's ability to avoid obscured obstacles in the environment, the last baseline applies an unknown-space penalty to the A* planner. Each cell in the voxel grid is assigned as either occupied, unoccupied, or unobserved based on ray casting from the depth camera poses. This baseline assigned an extra cost of $100$ to the A* planner for traversing through unobserved cells.

Finally, our approach utilizes AC-PAC-NMPC without mapping or a global planner. We ran our experiments over 100 environments sampled from the training distribution. Trivial environments in which no obstacles blocked the shortest path to the goal were discarded.

We found that our approach had the highest success rate, 90\%, of reaching the goal without violating the obstacle constraints (Table \ref{table:fixedwing-sim-baselines}). Additionally, we found that our approach accomplished this with the lowest average accumulated cost over successful trials out of all the PAC-NMPC baselines. This indicates that in these environments, the RL guidance was most effective in enabling successful, low cost navigation, even outperforming PAC-NMPC with mapping and a global planner. Additionally, since our approach had a higher success rate than the RL actor alone, it was successful in improving the safety of the RL policy while benefiting from its improved long range behavior. 

\subsubsection{Ablation Study}

We next performed an ablation study to evaluate the impact of two major contributions of this work: RL-based warm-start and learned sensor prediction. This study was run in the same sampled environments of the prior section. To evaluate RL-based warm-start, we replaced it with the default PAC-NMPC warm-start approach as presented in \cite{pacnmpc}. To evaluate the learned sensor prediction, we used geometric projection with nearest pixel assignment. We found that the lack of either of these components resulted in successful navigation in only a small percentage of trials (Table \ref{table:fixedwing-sim-ablation}).

\begin{table}[t]
\centering
\begin{tabular}{|c | c || c | c|}
\hline
RL-based Warm-start & Learned Prediction & Success & Cost \\ 
\hline
\hline
& \checkmark & 3\% & 1001.2 \\
\hline
\checkmark &  & 11\% & 889.7 \\
\hline
\checkmark & \checkmark & \textbf{90\%} & \textbf{513.3} \\
\hline
\end{tabular}
\vspace{2mm}
\caption{Ablation Study Results}
\label{table:fixedwing-sim-ablation}
\end{table}

\begin{figure}[t]
    \centering
    \includegraphics[trim={0 0 0 0},clip,width=1.0\columnwidth]{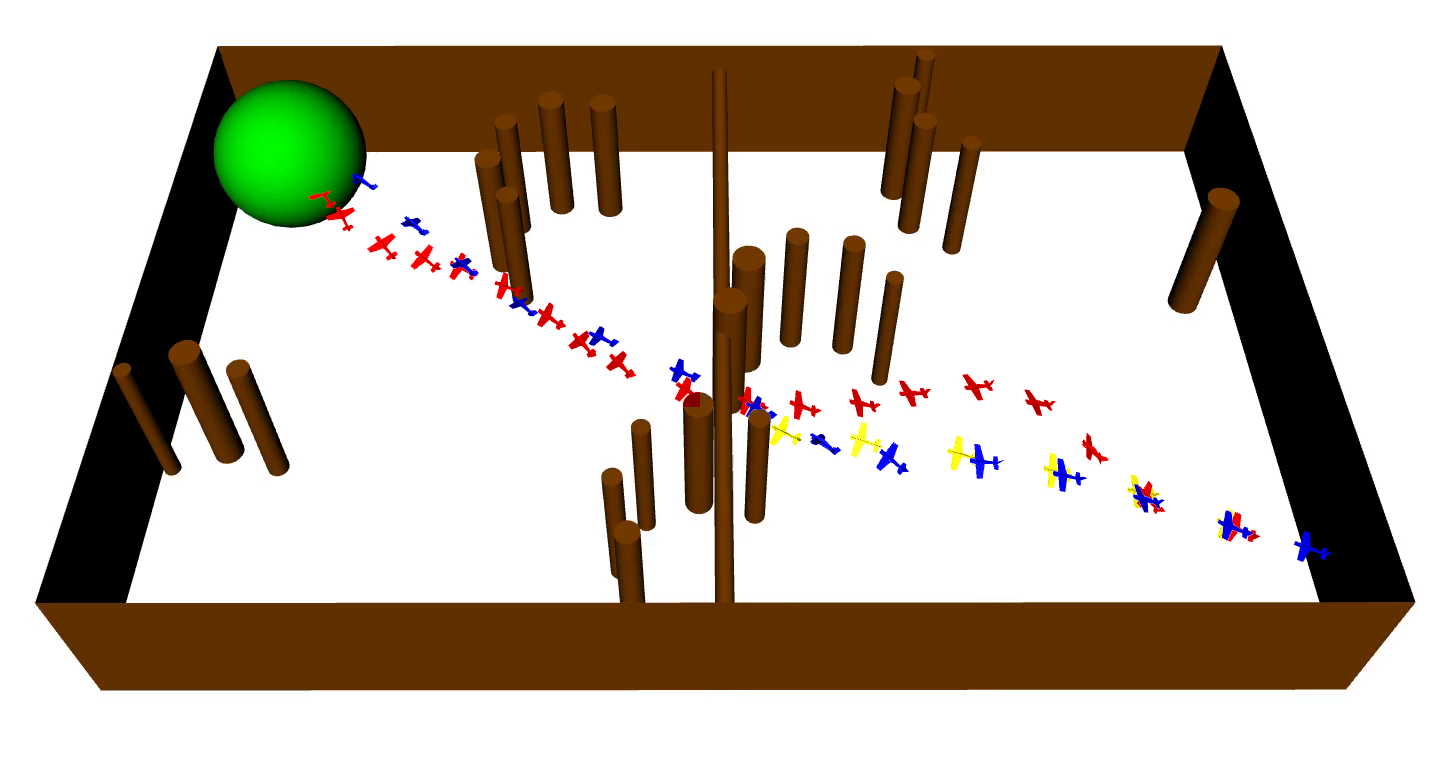}
    \caption{Out-of-distribution simulation environment example. Ceiling/floor not shown. RL actor (collision) in yellow, PAC-NMPC with global planner in red, AC-PAC-NMPC in blue.}
    \label{fig:fixedwing-ood-sim-environment}
\end{figure}

We believe that poor performance when lacking learned sensor prediction is caused by the inaccurate and unsuitable predictions generated by geometric projection, as demonstrated in Section \ref{subsection:depth-sensor-prediction}. The RL-based warm-start was also particularly important for this system. One possible explanation is that without it, the critic network may be producing poor approximations of the value function. Because the critic is primarily trained on observation-input pairs generated by the actor policy, its estimates may be unreliable along trajectories that substantially deviate from the actor. This was not observed on the rally car with LiDAR, perhaps because its dynamics and sensor models have significantly lower dimensionality, thus reducing the impact of this effect. These results demonstrate the importance of these novel contributions when utilizing AC-PAC-NMPC to control more complex systems.

\subsubsection{Out of Distribution Experiment}

We evaluated the performance of our approach in environments that are outside of the RL training distribution. Since the actor and critic models were only trained in the presence of vertical cylindrical obstacles, we altered the testing environments by adding a horizontal obstacle. This is a relatively simple change to the testing environments for MPC, but may represent a significant distribution shift for the RL actor and critic models.

We reused the same testing environments from the previous subsections, but added a horizontal obstacle in the middle of each room. The horizontal obstacle was randomly sampled between heights of $1.25\mathrm{m}$ and $3.25\mathrm{m}$. Obstacles sampled within $0.6\mathrm{m}$ of the height of the goal were resampled since this blocks a large portion of the feasible flight space. The horizontal obstacle had a radius of $0.2\mathrm{m}$. An example environment is shown in Figure \ref{fig:fixedwing-ood-sim-environment}.

\begin{table}[t]
\centering
\begin{tabular}{|c | c || c| c|}
\hline
\multicolumn{2}{|c||}{Approach} & Success & Cost \\
\hline
\hline
 \multicolumn{2}{|c||}{RL Actor Network} & 46\% & \textbf{481.8} \\ 
\hline
\hline
\multirow{2}{*}{PAC-NMPC} & Map \& A* Unknown & \textbf{76\%} & 732.6 \\
\cline{2-4}
 & Actor-Critic (ours) & \textbf{76\%} & 512.1 \\
\hline
\end{tabular}
\vspace{2mm}
\caption{Out Of Distribution Experiment Results}
\label{table:fixedwing-ood-results}
\end{table}
In the presence of these out-of-distribution obstacles, the RL actor policy violated obstacle constraints in most trials, dropping to a 46\% success rate. Our approach, on the other hand, succeeded in 76\% of trials, demonstrating improved ability to transfer to environments outside of the training distribution. In these environments, our approach matched the success rate of the best baseline, but achieved an improved average accumulated cost over successful trials (Table \ref{table:fixedwing-ood-results}). These results suggest that our approach is able to incorporate some of the improved long range behavior of the RL policy while being more robust to shifts from the training distribution.
\subsection{Hardware Experiments}

\begin{figure*}[t]
    \centering
    \begin{subfigure}[b]{0.49\textwidth}
        \centering
        \includegraphics[
            trim={100 200 0 0},
            clip,
            width=\linewidth
        ]{images/timelapse1.png}
    \end{subfigure}\hfill
    \begin{subfigure}[b]{0.49\textwidth}
        \centering
        \includegraphics[
            trim={0 150 100 50},
            clip,
            width=\linewidth
        ]{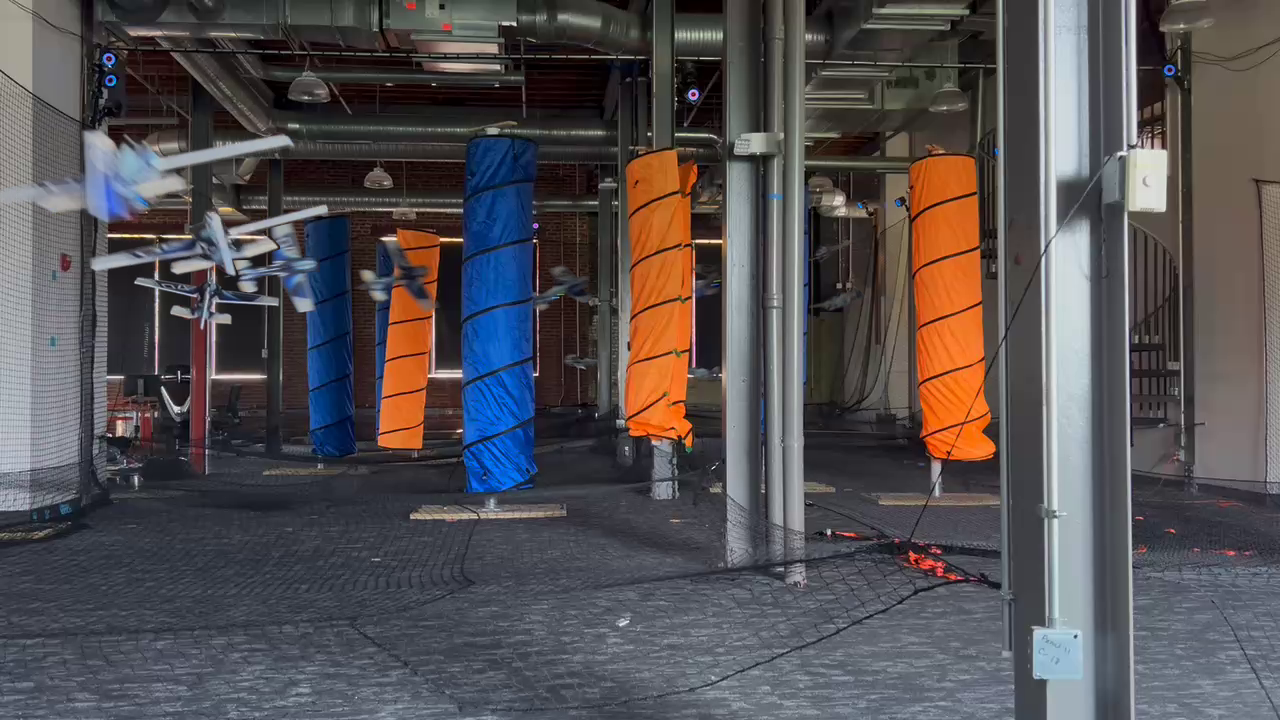}
    \end{subfigure}

    \caption{Example hardware environments with time-lapse trajectories of the aerial vehicle controlled by our approach.}
    \label{fig:fixedwing-hardware-environments}
\end{figure*}

We evaluated our approach on a Twisted Hobbys 32-inch wing-span fixed-wing aerial vehicle equipped with an Intel RealSense D450 depth camera module and Intel Vision Processor D4 Board. An on-board Arduino Uno Q running a ROS2 docker image streamed the depth images to the controller laptop over WiFi at 50Hz using the realsense-ros wrapper.

The experiments took place in an Optitrack motion capture facility, which provided pose and velocity state estimates to the controller laptop at 180Hz over ethernet. The laptop had an Intel i9-13900H CPU and a Nvidia GeForce RTX 4080 Laptop GPU. 
The laptop executed the control algorithms and transmitted resulting control inputs to the aerial vehicle using a Futaba T6K transmitter.

Environment generation followed the same procedure as Section $\ref{subsubsection:fixed-wing-environments}$, with a fixed room size and 1 to 2 sampled semi-circle traps. I-beams and a spiral staircase were permanent obstacles across all environments. Play tunnels supported by PVC pipes were placed at the sampled obstacle positions. Example hardware environments are shown in Figure \ref{fig:fixedwing-hardware-environments}. We evaluated our approach in 10 randomly sampled environments.

\begin{figure}[t]
    \centering
    \includegraphics[trim={0 0 0 0},clip,width=1.0\columnwidth]{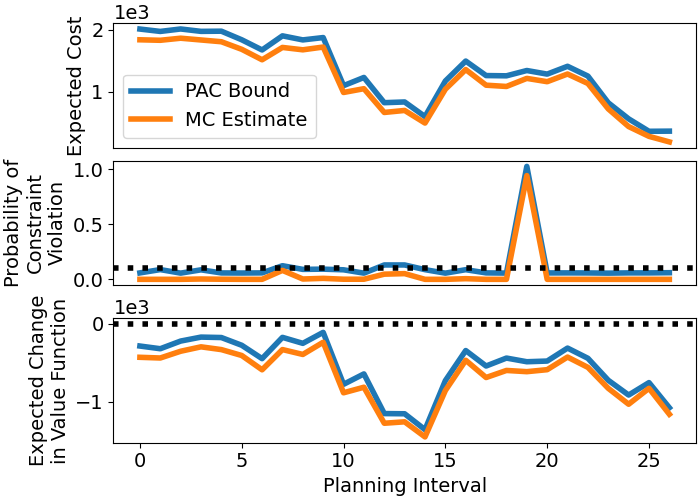}
    \caption{Optimized PAC bounds compared to Monte Carlo estimates at each planning interval for one trial.}
    \label{fig:fixedwing-bounds}
\end{figure}

\begin{figure}[t]
    \centering
    \includegraphics[trim={0 13 0 64},clip,width=0.49\columnwidth]{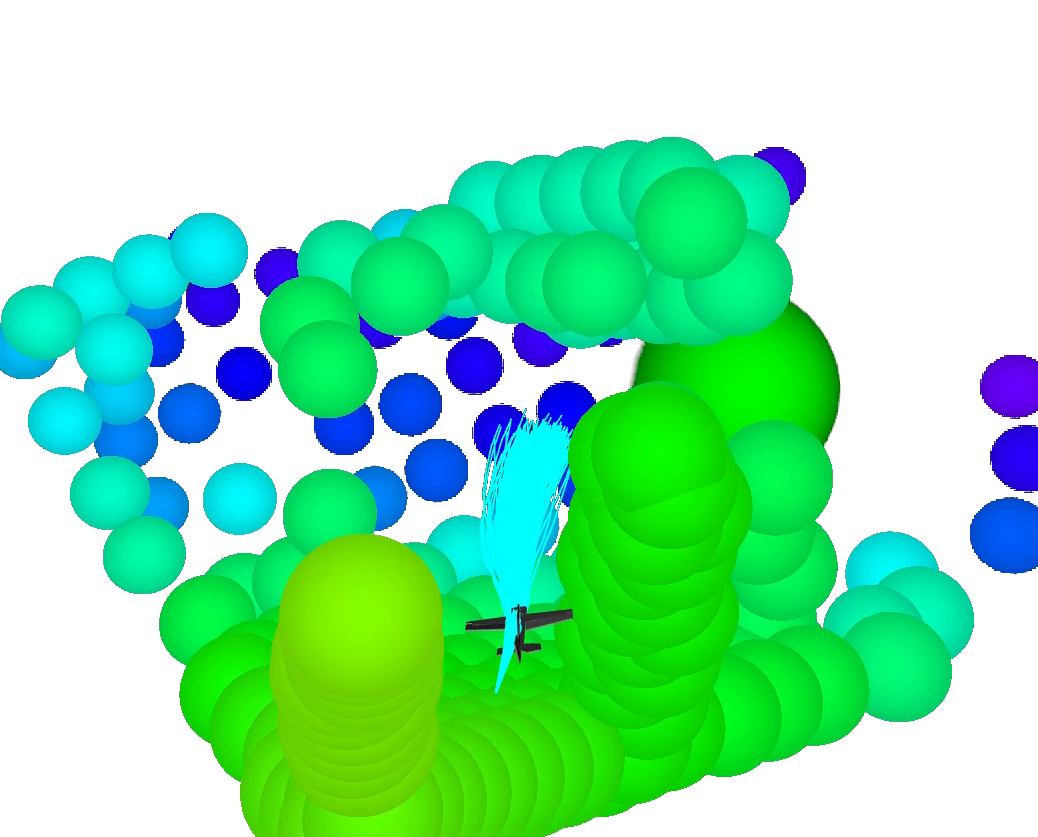}%
    \hfill
    \includegraphics[trim={0 13 0 64},clip,width=0.49\columnwidth]{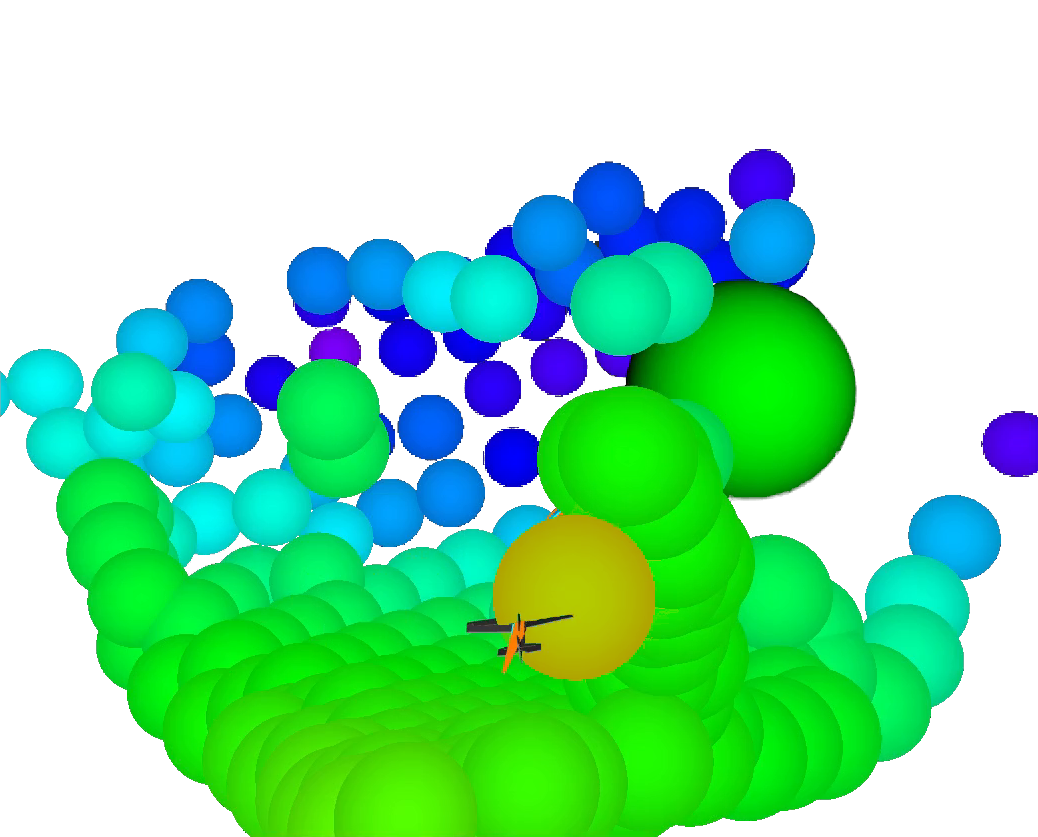}
    \caption{Spurious noise in consecutive depth camera measurements, which caused a spike in $\mathcal{C}^+_{\alpha}$.}
    \label{fig:sensor-noise}
\end{figure}

We compared our approach against the RL actor and PAC-NMPC with mapping, an A* global planner, and unknown space costs. The actor, critic, and sensor prediction models were trained only in simulation and transferred to hardware without any additional fine-tuning. The depth images were decimated to $80 \times 60$ pixels on the Intel Vision Processor D4 Board. These images contained a lot of noise between foreground obstacles and the background. To reduce this noise, we ran a gradient filter on the Arduino Uno Q that invalidated pixels at which the local depth gradients exceeded 10\% of the measured depth. When running PAC-NMPC with mapping, the images were further decimated to $40 \times 30$ pixels with a median filter to remove outliers. When running the RL actor and AC-PAC-NMPC, images were decimated to $16 \times 12$ pixels (input resolution to the learned models) with the median filter.

We found that our approach had the highest success rate, 80\%, and achieved the lowest average accumulated cost over successful trials (Table \ref{table:fixedwing-hardware-results}). Figure \ref{fig:fixedwing-bounds} displays the optimized PAC bounds at each planning interval for one of the environments. In this trial, there was a spike in the probability of constraint violation bound, $\mathcal{C}^+_{\alpha}$, which was caused by spurious noise in the depth camera data (Fig. \ref{fig:sensor-noise}).

Both baselines only achieved a 40\% success rate. The RL actor demonstrated a significantly lower success rate on hardware than in simulation. Two possible contributions are the mismatch of the modeled dynamics compared to the physical system and the addition of sensor noise. This suggests that our approach is better able to bridge the sim-to-real gap than RL policies alone. 

The PAC-NMPC baseline also demonstrated significantly degraded hardware performance. During testing, we observed that noise in the depth camera still occasionally progressed to the mapping stage, despite the gradient filter and median decimation. Although the Nav 2 mapping algorithm attempts to remove false observations through ray casting, we observed that some intermittent noise remained in the map, which might have adversely affected planning performance. This may indicate that our approach, which limits the integration of sensor noise by only querying a history of depth images, could be more robust to this form of sensor noise. 

\begin{table}[t]
\centering
\begin{tabular}{|c | c || c| c|}
\hline
\multicolumn{2}{|c||}{Approach} & Success & Cost \\
\hline
\hline
\multicolumn{2}{|c||}{RL Actor Network} & 40\% & 403.8 \\ 
\hline
\hline
\multirow{2}{*}{PAC-NMPC} & Map \& A* Unknown & 40\% & 325.4 \\
\cline{2-4}
 & Actor-Critic (ours) & \textbf{80\%} & \textbf{252.2} \\
\hline
\end{tabular}
\vspace{2mm}
\caption{Fixed-wing aerial vehicle hardware results.}
\label{table:fixedwing-hardware-results}
\end{table}

\begin{figure}[t]
    \centering
    \includegraphics[trim={50 135 75 175},clip,width=1.0\columnwidth]{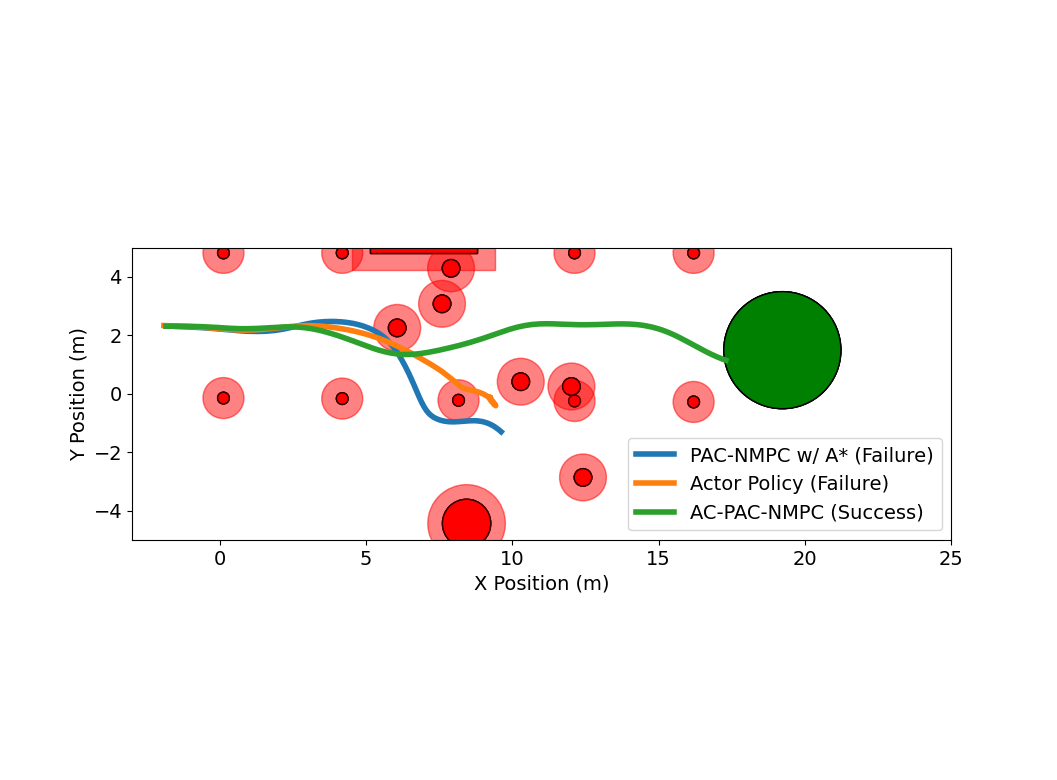}
    \caption{Flight paths from one hardware trial.}
    \label{fig:trial1-paths}
\end{figure}

Flight paths from one of the trials are shown in Figure \ref{fig:trial1-paths}. These results demonstrate that our approach successfully controlled, in real-time, a high-dimensional, nonlinear, robotic system through an unknown environment using only local perception while providing probabilistic guarantees of safety.


\section{Analysis}

In this section, through analysis of the value function and optimal policy in the presence of constraints, we provide theoretical justification for using actor and critic models within NMPC frameworks to explicitly enforce constraints.

Often, when training actor and critic models in the presence of constraints, the RL training cost, $\rlcostfunc(\state_t,\inp_t)$, is defined as
\begin{align}
\rlcostfunc(\state_t,\inp_t)= \costfunc(\state_t,\inp_t) + \gamma_c \constraintfunc(\state_t,\inp_t)
\end{align}
where $\gamma_c > 0$ is a constraint violation penalty. For the purpose of this analysis, we will assume the training constraint is strictly non-negative, $\constraintfunc(\state_t, \inp_t) \geq 0$, where $\constraintfunc(\state_t, \inp_t) = 0$ indicates constraint satisfaction. The constraint violation penalty is either heuristically selected and static, or it can be interpreted as a Lagrange multiplier and learned during training, as is common in safe-RL approaches.

However, there is no guarantee that the resulting optimal policy will be constraint-free. Furthermore, there is no guarantee that a learned approximate policy will be constraint-free, especially outside of the training distribution. Through the following analysis, we show that, given a sufficiently large constraint penalty and the existence of a feasible policy, then for every policy generated by the optimal policy, there exists nearby control inputs that strictly satisfy the constraints while still descending the value function.

\subsection{One-Step Feasible Descent}

First, we prove that for a large enough $\gamma_c$, there exists some input in the neighborhood of the optimal policy that satisfies the constraints while descending the value function. We first show there exist non-optimal inputs in a local region around the optimal policy that still descend the value function.
\begin{lemma}
    For a given $\state_t$, let the value function $V:\mathcal{X} \mapsto \mathbb{R}$ be continuous and bounded, $p(\state_{t+1} \mid \state_t, \inp_t)$ be weakly continuous in $\inp_t$, and $g(\state_t,\inp_t) \succ 0$. Then, there exists a local region $B_{\delta}(\inp^*_t)=\{\inp_t\in\mathbb{R}^n:||\inp_t-\inp^*_t||<\delta\}$, such that $\EV-V(\state_t)<0$.
    \label{lemma:region}
\end{lemma}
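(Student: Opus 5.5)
Our approach is to evaluate the Bellman equation of Def.~\ref{def:sto_costtogo} at the optimal input $\inp^*_t=\bpi^*(\state_t)$, obtain a strict descent at $\inp^*_t$ with a positive margin, and then extend it to a ball around $\inp^*_t$ by continuity. Since $\inp^*_t$ attains the minimum in \eqref{eq:sto_optimal_value_function},
\begin{equation*}
V(\state_t)=\rlcostfunc(\state_t,\inp^*_t)+\beta\,\mathbb{E}\left[V(\state_{t+1})\mid\state_t,\inp^*_t\right].
\end{equation*}
Rearranging gives
\begin{equation*}
\mathbb{E}\left[V(\state_{t+1})\mid\state_t,\inp^*_t\right]-V(\state_t)=-\rlcostfunc(\state_t,\inp^*_t)-(1-\beta)\,\mathbb{E}\left[V(\state_{t+1})\mid\state_t,\inp^*_t\right].
\end{equation*}
The stage cost satisfies $\rlcostfunc\succ 0$, so $V\ge 0$: it is a discounted sum of nonnegative costs and a fixed point of the Bellman operator, and boundedness of $V$ makes this well defined. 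Because $1-\beta>0$, the second term on the right is nonpositive. The first term is strictly negative. Hence
\begin{equation*}
\mathbb{E}\left[V(\state_{t+1})\mid\state_t,\inp^*_t\right]-V(\state_t)\le-\rlcostfunc(\state_t,\inp^*_t)=:-\epsilon<0.
\end{equation*}
Positive definiteness gives $\rlcostfunc>0$ away from the goal. At the goal the statement degenerates, and we will note this as an implicit assumption $\state_t\neq\goalstate$.

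Next we show that $\inp_t\mapsto\mathbb{E}[V(\state_{t+1})\mid\state_t,\inp_t]$ is continuous at $\inp^*_t$. Weak continuity of $p(\cdot\mid\state_t,\inp_t)$ in $\inp_t$ means that for every bounded continuous test function $\varphi$, the map $\inp_t\mapsto\int\varphi\,dp(\cdot\mid\state_t,\inp_t)$ is continuous. Since $V$ is bounded and continuous, we may take $\varphi=V$, which gives exactly the continuity we need. This is the step the hypotheses are tailored for.

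Finally, continuity gives a $\delta>0$ such that $\|\inp_t-\inp^*_t\|<\delta$ implies
\begin{equation*}
\left|\mathbb{E}[V(\state_{t+1})\mid\state_t,\inp_t]-\mathbb{E}[V(\state_{t+1})\mid\state_t,\inp^*_t]\right|<\epsilon.
\end{equation*}
Combining this with the margin above yields $\mathbb{E}[V(\state_{t+1})\mid\state_t,\inp_t]-V(\state_t)<0$ on all of $B_\delta(\inp^*_t)$.

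The main obstacle is the first step, namely obtaining a strict margin at $\inp^*_t$. It requires $V\ge 0$, or at least $\mathbb{E}[V(\state_{t+1})]\ge 0$, so that the $(1-\beta)$ term does not work against us. It also requires the minimizer $\inp^*_t$ to exist, which we take as given by the definition of $\bpi^*$. If nonnegativity of $V$ is in doubt, we would argue it directly. The Bellman operator with a nonnegative stage cost maps nonnegative bounded functions to nonnegative functions and is a $\beta$-contraction, so its unique bounded fixed point is nonnegative.
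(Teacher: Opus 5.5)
\textbf{There is a genuine gap in your first step: a sign error in rearranging the Bellman equation.} From $V(\state_t)=\rlcostfunc(\state_t,\inp^*_t)+\beta\,\mathbb{E}[V(\state_{t+1})\mid\state_t,\inp^*_t]$ one gets
\begin{equation*}
\mathbb{E}\left[V(\state_{t+1})\mid\state_t,\inp^*_t\right]-V(\state_t)=(1-\beta)\,\mathbb{E}\left[V(\state_{t+1})\mid\state_t,\inp^*_t\right]-\rlcostfunc(\state_t,\inp^*_t).
\end{equation*}
So the $(1-\beta)$ term enters with a \emph{plus} sign. Your correct observation that $V\ge 0$ therefore works against you. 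It yields $\mathbb{E}[V(\state_{t+1})\mid\state_t,\inp^*_t]-V(\state_t)\ge-\rlcostfunc(\state_t,\inp^*_t)$, which is the opposite of the margin you claim. Under your hypotheses alone, the sign at $\inp^*_t$ is simply undetermined.

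This cannot be repaired by a cleverer argument. Suppose every input at $\state_t$ leads deterministically to a state with stage cost $b$, and then to an absorbing zero-cost goal, while $\rlcostfunc(\state_t,\cdot)=a>0$. Then the difference equals $(1-\beta)b-a$, which is positive once $b>a/(1-\beta)$. Likewise, at a trap state where staying put is optimal, the difference is exactly $0$. Incidentally, your bound $-\rlcostfunc(\state_t,\inp^*_t)$ is exactly the value of the \emph{discounted} difference $\beta\,\mathbb{E}[V(\state_{t+1})\mid\state_t,\inp^*_t]-V(\state_t)$. The lemma, however, is stated for the undiscounted one.

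\textbf{How the paper closes this step.} It effectively sets $\rlcostfunc_\beta:=\rlcostfunc-(1-\beta)\,\mathbb{E}[V(\state_{t+1})\mid\state_t,\inp_t]$, notes that the difference at $\inp^*_t$ equals $-\rlcostfunc_\beta(\state_t,\inp^*_t)$, and then asserts $\rlcostfunc_\beta\succ 0$ ``for sufficiently large $\beta$ and $\rlcostfunc$.'' For example, in a goal-reaching problem where $V$ stays bounded as $\beta\to 1$, the term $(1-\beta)\mathbb{E}[V]$ vanishes while $\rlcostfunc(\state_t,\inp^*_t)>0$ away from the goal. This is an extra assumption, and you need to state and use it. Take as your margin
\[
\epsilon:=\rlcostfunc(\state_t,\inp^*_t)-(1-\beta)\,\mathbb{E}[V(\state_{t+1})\mid\state_t,\inp^*_t]>0.
\]
With that in place, the rest of your argument is correct and matches the paper's. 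The map $\inp_t\mapsto\mathbb{E}[V(\state_{t+1})\mid\state_t,\inp_t]$ is continuous, by weak continuity tested against the bounded continuous $V$. That continuity then gives the $\delta$-ball.
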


\begin{proof}
    Given the optimal input $\inp^*_t$, we can write 
    \begin{equation}
        V(\state_t) = g(\state_t,\inp^*_t)+\beta \mathbb{E}\left[V(\state_{t+1}) \mid \state_{t}, \inp_{t}^* \right]
        \label{eq:sto_opt_vf}
    \end{equation}
    where $\beta\in(0,1)$ and for some non-optimal input $\inp_t$, we have
    \begin{equation}
        V(\state_t) \le \rlcostfunc(\state_t, \inp_t) + \beta \EV.
    \end{equation}
    We define expected one-step change in the value function as
    \begin{align}
        \Delta V (\state_t,\inp_t) &= \mathbb{E}\left[V(\state_{t+1}) \mid \state_{t}, \inp_{t} \right]-V(\state_t) \\ \nonumber & \ge (1-\beta)\mathbb{E}\left[V(\state_{t+1}) \mid \state_{t}, \inp_{t} \right] - \rlcostfunc(\state_t, \inp_t)
        \\ \nonumber & \ge  - \rlcostfunc_{\beta}(\state_t, \inp_t).
        \label{eg:sto_deltav}
    \end{align}
    We note that $\Delta V(\state_t,\inp^*_t) = -g_{\beta}(\state_t,\inp^*_t)$ and $g_{\beta}(\state_t,\inp_t)\succ 0$ for sufficiently large $\beta$ and $\rlcostfunc(\state_t, \inp_t)$. Given $V(\state_t)$ is continuous and bounded and $p(\state_{t+1} \mid \state_t, \inp_t)$ is weakly continuous in $\inp_t$, $\EV$ is continuous, and thus, $\Delta V (\state_t,\inp_t)$ is continuous. Therefore, $\exists \delta > 0$ such that $||\inp_t-\inp^*_t||< \delta$ where $\Delta V(\state_t,\inp_t) < 0$. \qedhere
\end{proof}

Next, under some assumptions, we show that as $\gamma_c$ increases, expected constraint-to-go approaches lower bound $\epsilon$.

\begin{lemma}
    Let $\rlcostfunc(\state_t, \inp_t)= \costfunc(\state_t,\inp_t) + \gamma_c \constraintfunc(\state_t,\inp_t)$, where $\gamma_c>0$, $\costfunc(\state_t,\inp_t) \succ 0$, and $\constraintfunc(\state_t,\inp_t) \succeq 0$. For a given $\state_t$, suppose that $\Pi_\epsilon^c(\state_t) = \{\bpi | \bpi(\state_t) \in \mathcal{U}^c(\state_t), V^\constraintfunc_{\bpi}(\state_t) < \epsilon\} \neq \varnothing$ where $\mathcal{U}^c(\state_t) = \left\{\inp_t|\constraintfunc(\state_t, \inp_t)=0\right\}$, and $V^\constraintfunc_{\bpi}(\state_t) = \mathbb{E} \left[\sum_{i=t}^\infty \beta^{i-t}\constraintfunc(\state_i,\bpi(\state_i)) \right]$. Then there exists a large enough $\gamma_c$ such that $V^\constraintfunc_{\bpi_{\gamma_c}^*}(\state_t) < \epsilon$, where $\bpi_{\gamma_c}^*$ is the optimal policy for a corresponding $\gamma_c$.
    \label{lemma:limit}
\end{lemma}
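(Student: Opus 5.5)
The plan is a penalty-method comparison argument: compare the optimal penalized value against a fixed feasible policy from $\Pi^c_\epsilon(\state_t)$, and show that a large $\gamma_c$ forces the optimal policy's discounted constraint-to-go below $\epsilon$. Write the total discounted cost of any policy $\bpi$ from $\state_t$ as
\begin{equation*}
V_{\bpi}(\state_t) = V^{\costfunc}_{\bpi}(\state_t) + \gamma_c V^{\constraintfunc}_{\bpi}(\state_t),
\end{equation*}
where $V^{\costfunc}_{\bpi}(\state_t) = \mathbb{E}\left[\sum_{i=t}^\infty \beta^{i-t}\costfunc(\state_i,\bpi(\state_i))\right]$. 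The split uses linearity of expectation, which requires $V^{\costfunc}_{\bpi}$ to be finite. I will assume $\costfunc$ is bounded on the relevant set (or at least that $V^{\costfunc}_{\bpi}$ is finite for the comparison policy), so that the discounted sum is finite; I will state this assumption explicitly.

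Fix a policy $\bar\bpi \in \Pi^c_\epsilon(\state_t)$, so that $\bar V^c := V^{\constraintfunc}_{\bar\bpi}(\state_t) < \epsilon$ and $\bar V^\ell := V^{\costfunc}_{\bar\bpi}(\state_t) < \infty$. By optimality of $\bpi^*_{\gamma_c}$ for the penalized cost,
\begin{equation*}
V^{\costfunc}_{\bpi^*_{\gamma_c}}(\state_t) + \gamma_c V^{\constraintfunc}_{\bpi^*_{\gamma_c}}(\state_t) \le \bar V^\ell + \gamma_c \bar V^c .
\end{equation*}
Since $\costfunc \succ 0$, we have $V^{\costfunc}_{\bpi^*_{\gamma_c}}(\state_t) \ge 0$, and dropping this term and dividing by $\gamma_c$ gives
\begin{equation*}
V^{\constraintfunc}_{\bpi^*_{\gamma_c}}(\state_t) \le \bar V^c + \bar V^\ell/\gamma_c .
\end{equation*}
Choosing $\gamma_c > \bar V^\ell/(\epsilon - \bar V^c)$, which is well defined because $\bar V^c < \epsilon$ strictly, yields $V^{\constraintfunc}_{\bpi^*_{\gamma_c}}(\state_t) < \epsilon$. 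The same bound holds for every larger $\gamma_c$, which matches the phrasing ``approaches lower bound $\epsilon$''.

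The main obstacle is justifying the comparison inequality. It requires three things:
\begin{enumerate}
\item The optimal policy $\bpi^*_{\gamma_c}$ from the paper's definition of the optimal policy must actually attain the infimum of the penalized discounted cost over all policies, including $\bar\bpi$. This is the standard fact that the Bellman fixed point $V$ in (\ref{eq:sto_optimal_value_function}) equals $\inf_{\bpi} V_{\bpi}$.
\item The set of policies over which we minimize must contain $\bar\bpi$.
\item The expected discounted costs must be finite.
\end{enumerate}
I would invoke the standard discounted-MDP result that the optimal Bellman policy is optimal among stationary Markov policies, which needs bounded costs or suitable measurability and continuity. Here I would appeal to the continuity assumptions already used for Lemma \ref{lemma:region}. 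The condition $\bar\bpi(\state_t) \in \mathcal{U}^c(\state_t)$ in the definition of $\Pi^c_\epsilon$ is not needed for this bound; it only records that the feasible comparison policy satisfies the constraint at the first step.
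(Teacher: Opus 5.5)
Your proposal is correct and follows the paper's proof essentially line for line. Like the paper, you compare the optimal penalized value against a fixed policy in $\Pi^c_\epsilon(\state_t)$, drop the nonnegative term $V^{\costfunc}_{\bpi^*_{\gamma_c}}(\state_t)$, and divide by $\gamma_c$. Your explicit threshold $\gamma_c > \bar V^\ell/(\epsilon - \bar V^c)$ is slightly cleaner than the paper's last step: the paper passes through $V^{\constraintfunc}_{\bpi^*_{\gamma_c}}(\state_t) < \frac{1}{\gamma_c}V^{\costfunc}_{\bpi}(\state_t) + \epsilon$, which hides the strict gap $\epsilon - V^{\constraintfunc}_{\bpi}(\state_t) > 0$ that the conclusion actually needs.
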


\begin{proof}
    We can write 
    \begin{align}
        V_{\bpi_{\gamma_c}^*}(\state_t) &= V^\costfunc_{\bpi_{\gamma_c}^*}(\state_t) + \gamma_c V^\constraintfunc_{\bpi_{\gamma_c}^*}(\state_t) \\
        V^\costfunc_{\bpi_{\gamma_c}^*}(\state_t) &= \mathbb{E} \left[\sum_{i=t}^\infty \beta^{i-t}\costfunc(\state_i,\bpi_{\gamma_c}^*(\state_i)) \right] \\
        V^\constraintfunc_{\bpi_{\gamma_c}^*}(\state_t) &= \mathbb{E} \left[\sum_{i=t}^\infty \beta^{i-t}\constraintfunc(\state_i,\bpi_{\gamma_c}^*(\state_i)) \right]
    \end{align}
    Since $\bpi_{\gamma_c}^*$ is optimal, $\forall \bpi \in \Pi_\epsilon^c(\state_t)$
    \begin{align} 
        V^\costfunc_{\bpi_{\gamma_c}^*}(\state_t) + \gamma_c V^\constraintfunc_{\bpi_{\gamma_c}^*}(\state_t) \leq V^\costfunc_{\bpi}(\state_t) + \gamma_c V^\constraintfunc_{\bpi}(\state_t).
    \end{align} 
    Since $V^\costfunc_{\bpi_{\gamma_c}^*}(\state_t)\geq0$ 
    \begin{align} 
        V^\constraintfunc_{\bpi_{\gamma_c}^*}(\state_t) &\leq \frac{1}{\gamma_c}V^\costfunc_{\bpi}(\state_t) + V^\constraintfunc_{\bpi}(\state_t) \\
        V^\constraintfunc_{\bpi_{\gamma_c}^*}(\state_t) &< \frac{1}{\gamma_c}V^\costfunc_{\bpi}(\state_t) + \epsilon
    \end{align} 
    Therefore, for a large enough $\gamma_c < \infty$, $V^\constraintfunc_{\bpi_{\gamma_c}^*}(\state_t) < \epsilon$. \qedhere
\end{proof}

Given the previous lemmas and a bound on $\epsilon$, we show that there exists some feasible input in the neighborhood of the optimal policy that descends the value function.

\begin{theorem}
    Let $\delta_{min} = \min_{\bpi \in \Pi_\epsilon^c} \left[\delta_{\bpi}\right] > 0$ where $\delta_{\bpi}$ is the neighborhood from Lemma \ref{lemma:region} for $V_{\bpi}$. For the compact subset $\mathcal{U}_{\delta_{min}} = \{ \inp_t \in \mathcal{U} \mid \mathrm{dist}(\inp_t,\mathcal{U}^c) \geq \delta_{min} \}$, by the Extreme Value Theorem, $\constraintfunc_{\delta_{min}} = \min_{\inp_t \in \mathcal{U}_{\delta_{min}}} [\constraintfunc(\state_t,\inp_t)] > 0$.

    For a given $\state_t$, and the assumptions of Lemmas \ref{lemma:region} and \ref{lemma:limit}, if $\Pi_\epsilon^c \neq \varnothing$ with $\epsilon < \constraintfunc_{\delta_{min}}$, then there exists a feasible input, $\inp_t^c \in \mathcal{U}^c(\state_t)$, in the neighborhood of $\inp^*_t = \bpi^*_{\gamma_c}(\state_t)$ which satisfies $\Delta V(\state_t, \inp_t^c)< 0$.
    \label{theorem:onestep}
\end{theorem}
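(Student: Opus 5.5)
The argument combines the two lemmas with a contradiction step: I will show that the optimal input $\inp^*_t = \bpi^*_{\gamma_c}(\state_t)$ lies within distance $\delta_{min}$ of the feasible set $\mathcal{U}^c(\state_t)$, and then use the descent region from Lemma \ref{lemma:region} to pick a feasible point inside it.

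\textbf{Step 1 (choose $\gamma_c$).} Since $\Pi_\epsilon^c \neq \varnothing$, Lemma \ref{lemma:limit} gives a finite $\gamma_c$ with $V^\constraintfunc_{\bpi^*_{\gamma_c}}(\state_t) < \epsilon$. I fix such a $\gamma_c$.

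\textbf{Step 2 (bound the immediate constraint).} Because $\constraintfunc \succeq 0$, every term of the discounted sum defining $V^\constraintfunc_{\bpi^*_{\gamma_c}}(\state_t)$ is nonnegative. The $i=t$ term is deterministic given $\state_t$, so
\[
\constraintfunc(\state_t,\inp^*_t) \le V^\constraintfunc_{\bpi^*_{\gamma_c}}(\state_t) < \epsilon < \constraintfunc_{\delta_{min}}.
\]

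\textbf{Step 3 (locate $\inp^*_t$).} Suppose, for contradiction, that $\inp^*_t \in \mathcal{U}_{\delta_{min}}$. Then the definition of $\constraintfunc_{\delta_{min}}$ as a minimum over that set gives $\constraintfunc(\state_t,\inp^*_t) \ge \constraintfunc_{\delta_{min}}$, contradicting Step 2. Hence $\mathrm{dist}(\inp^*_t,\mathcal{U}^c(\state_t)) < \delta_{min}$, and there is some $\inp^c_t \in \mathcal{U}^c(\state_t)$ with $\|\inp^c_t - \inp^*_t\| < \delta_{min}$. If $\mathcal{U}^c(\state_t)$ is closed, which follows from continuity of $\constraintfunc$, the distance is attained. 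Otherwise the strict inequality still lets me pick a point at distance strictly below $\delta_{min}$.

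\textbf{Step 4 (descent).} Apply Lemma \ref{lemma:region} at the optimal input $\inp^*_t$. It gives a ball $B_{\delta}(\inp^*_t)$ on which $\Delta V(\state_t,\cdot) < 0$. By the definition of $\delta_{min}$ as a minimum over the admissible neighborhood radii, $\delta_{min} \le \delta$, so $B_{\delta_{min}}(\inp^*_t)$ lies inside this descent region. Therefore $\inp^c_t \in B_{\delta_{min}}(\inp^*_t)$ is feasible and satisfies $\Delta V(\state_t,\inp^c_t) < 0$.

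\textbf{Main obstacle.} The delicate step is Step 4. Its validity rests on interpreting $\delta_{min}$ so that it lower-bounds the descent radius around $\bpi^*_{\gamma_c}(\state_t)$ for the value function $V$ of the penalized problem. This requires the optimal policy for the chosen $\gamma_c$ to be among the policies over which the minimum defining $\delta_{min}$ is taken. A further subtlety is that $\gamma_c$ itself depends on $\epsilon$, so $\delta_{min}$ must not shift as $\gamma_c$ grows. I would address this by reading $\delta_{min}$ as uniform over the relevant family of value functions, taken as given in the statement, and note this explicitly. The rest of the argument is direct.
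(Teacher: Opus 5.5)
Your proposal is correct and follows essentially the same route as the paper: bound $\constraintfunc(\state_t,\inp^*_t)$ by the constraint-to-go from Lemma \ref{lemma:limit}, conclude $\mathrm{dist}(\inp^*_t,\mathcal{U}^c(\state_t))<\delta_{min}$ by contradiction with the definition of $\constraintfunc_{\delta_{min}}$, and then invoke Lemma \ref{lemma:region}. The paper runs the contradiction along a sequence $\gamma_c^k\to\infty$ rather than for one fixed $\gamma_c$, which is only cosmetic, and it also leaves implicit the Step 4 subtlety you flag (that $\delta_{min}$ must bound the descent radius around $\bpi^*_{\gamma_c}(\state_t)$ from below, even though that policy need not lie in $\Pi_\epsilon^c$), so treating this as part of the hypothesis matches the source.
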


\begin{proof}

By Lemma \ref{lemma:limit}, there exists $\gamma_c$ large enough such that $V^\constraintfunc_{\bpi^*_{\gamma_c}}(\state_t) < \epsilon  < \constraintfunc_{\delta_{min}}$. Suppose that for every k in the sequence $\gamma_c^k \rightarrow \infty$,
\begin{equation}
\mathrm{dist}(\bpi_{\gamma_c^k}^*(\state_t), \mathcal{U}^c(\state_t)) \geq \delta_{min} \ \forall k.
\end{equation}
Therefore,
\begin{equation}
V^\constraintfunc_{\bpi_{\gamma_c^k}^*} \geq \constraintfunc(\state_t, \bpi^*_{\gamma_c^k}(\state_t)) \geq \constraintfunc_{\delta_{min}} \ \forall k,
\end{equation}
which is a contradiction. Therefore, there exists a large enough $\gamma_c$ such that $\mathrm{dist}(\bpi_{\gamma_c}^*(\state_t), \mathcal{U}^c(\state_t)) < \delta_{min}.$ By Lemma \ref{lemma:region}, a feasible input $\inp_t^c \in \mathcal{U}^c$ exists satisfying $\|\inp_t^c-\bpi^*_{\gamma_c}(\state_t)\| < \delta$, and thus $\Delta V(\state_t, \inp_t^c)< 0$, \qedhere

\end{proof}



\subsection{Multi-Step Feasible Descent}

Next, we show that the prior theorem generalizes when the policy is evaluated over a multi-step feedback policy $\bpi(\state_t, \bU) = (\bpi_t(\state_t, \bU), \cdots, \bpi_{t+\horizon-1}(\state_{t+\horizon-1}, \bU))$, where $\inp_t = \bpi_t(\state_t, \bU)$ and $\bU \in \Xi$. First, we define the multi-step feedback dynamics and cost.

\begin{definition}[Multi-Step Feedback Dynamics]
     The multi-step feedback dynamics is given as $p(\state_{t+\horizon} \mid \state, \bU) = \int \cdots \int \prod_{i=t}^{t+\horizon-1} p(\state_{i+1}|\state_{i},\bpi_i(\state_i, \bU)) d\state_{t+1} \cdots d\state_{t+\horizon-1}$.
    \label{def:multidynamics}
\end{definition}

\begin{definition}[Multi-Step Feedback Cost]
    We define
    \begin{align}
        \mathcal{G}_N(\state_t,\bU) &= \sum_{i=t}^{t+N-1}\beta^{i-t}\rlcostfunc_k(\state_{i},\bU) \\
        \rlcostfunc_k(\state_{i},\bU) &= \int g(\state_{i},\bpi_i(\state_i, \bU)) p(\state_{i}|\state_t,\bU)\, d\state_{i}
    \end{align}
    as the multi-step feedback cost, with analogous definitions for $\mathcal{L}_N(\state_t,\bU)$, $\costfunc(\state_{t},\bU)$, $\mathcal{C}_N(\state_t,\bU)$, and $\constraintfunc(\state_{t},\bU)$.
    \label{def:multicost}
\end{definition}

Given these, we show that there exists a feasible multi-step feedback policy within the neighborhood of an optimal multi-step feedback policy that descends the value function.

\begin{theorem}
    Under the assumptions of Theorem \ref{theorem:onestep} and a given $\state_t$, for a large enough $\gamma_c$, if $\Xi_\epsilon^c = \{\bU | \bpi_i(\state_i, \bU) \in \mathcal{U}^c(\state_i) \ \forall \ i \in \{t, \cdots, t+\horizon\}, V^\constraintfunc_{\bpi_t(\cdot,\bU)}(\state_t) < \epsilon\} \neq \varnothing$, where $\Xi$ is compact, $\bpi$ depends continuously on $\bU$, and the policy parameterization is rich enough to represent the optimal multi-step feedback policy, $\bpi(\cdot, \bU^*) = (\bpi_t^*, \cdots, \bpi_{t+\horizon-1}^*)$, then there exist feasible policy parameters $\bU_c$ in the neighborhood of $\bU^*$ that satisfies $\constraintfunc(\state_{i}, \bpi_i(\state_i, \bU_c))=0 \ \forall \ i \in \{t,...,t+\horizon-1\}$ and $\Delta V(\state_t, \bU_c) = \beta\mathbb{E}\left[V(\state_{t+\horizon}) \mid \state_t,\bU_c\right]-V(\state_t)<0$.
    \label{theorem:multistep}
\end{theorem}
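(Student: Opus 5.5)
The plan is to lift the one-step argument of Theorem \ref{theorem:onestep} to the multi-step setting by treating the parameter vector $\bU \in \Xi$ as a single ``input'' and the $\horizon$-step feedback rollout as a single transition. With this view, the kernel $p(\state_{t+\horizon}\mid\state_t,\bU)$ of Definition \ref{def:multidynamics} replaces the one-step kernel, and $\mathcal{G}_{\horizon}(\state_t,\bU)$ of Definition \ref{def:multicost} replaces the stage cost. The Bellman relation becomes the $\horizon$-step dynamic programming identity
\begin{equation*}
V(\state_t) = \min_{\bU\in\Xi}\left[\mathcal{G}_{\horizon}(\state_t,\bU) + \beta^{\horizon}\mathbb{E}\left[V(\state_{t+\horizon})\mid\state_t,\bU\right]\right].
\end{equation*}
This identity is attained at $\bU^*$ because the parameterization is assumed rich enough to represent $(\bpi^*_t,\cdots,\bpi^*_{t+\horizon-1})$. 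For non-optimal $\bU$ the equality becomes an inequality $\le$.

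The first step is the multi-step analogue of Lemma \ref{lemma:region}. I would show that $\bU \mapsto \mathbb{E}[V(\state_{t+\horizon})\mid\state_t,\bU]$ is continuous on $\Xi$. This follows by induction over the horizon. The one-step kernel is weakly continuous in $\inp$, and $\bpi_i$ is continuous in $(\state_i,\bU)$, so the composed kernel is weakly continuous in $\bU$. Integrating the bounded continuous $V$ against it then gives a continuous function of $\bU$. Next, rearranging the identity at $\bU^*$ exactly as in the proof of Lemma \ref{lemma:region} gives $\Delta V(\state_t,\bU^*) < 0$, with strict negativity coming from $\mathcal{G}_{\horizon} \succ 0$. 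Continuity then yields a ball $B_{\delta}(\bU^*)$ in $\Xi$ on which $\Delta V(\state_t,\bU) < 0$.

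The second step is the multi-step analogue of Lemma \ref{lemma:limit}, applied with $\bU$ as the decision variable. Comparing the optimal $\bU^*_{\gamma_c}$ against any element of $\Xi^c_\epsilon \neq \varnothing$ and using $\mathcal{L}_{\horizon} \ge 0$ gives $\mathcal{C}_{\horizon}(\state_t,\bU^*_{\gamma_c}) \le V^\constraintfunc(\state_t,\bU^*_{\gamma_c}) < \tfrac{1}{\gamma_c}V^{\costfunc} + \epsilon$. Hence for large $\gamma_c$ the expected discounted constraint of $\bU^*_{\gamma_c}$ is below $\epsilon$.

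The third step reproduces the contradiction argument of Theorem \ref{theorem:onestep}, now in $\Xi$. Define $\Xi_{\delta_{min}} = \{\bU\in\Xi : \mathrm{dist}(\bU,\Xi^c)\ge\delta_{min}\}$, where $\Xi^c$ is the set of parameters whose policies are feasible at every step. This set is compact because $\Xi$ is compact. By the Extreme Value Theorem, the continuous function $\bU\mapsto\mathcal{C}_{\horizon}(\state_t,\bU)$ has a positive minimum $\constraintfunc_{\delta_{min}}$ on it, and I take $\epsilon < \constraintfunc_{\delta_{min}}$. If $\mathrm{dist}(\bU^*_{\gamma_c^k},\Xi^c) \ge \delta_{min}$ along a sequence $\gamma_c^k\to\infty$, this contradicts the second step. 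So for large $\gamma_c$ there is $\bU_c \in \Xi^c$ with $\|\bU_c - \bU^*_{\gamma_c}\| < \delta_{min} \le \delta$. The first step then gives $\Delta V(\state_t,\bU_c) < 0$, and membership in $\Xi^c$ gives $\constraintfunc(\state_i,\bpi_i(\state_i,\bU_c))=0$ for all $i$.

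I expect the main obstacle to be positivity of $\constraintfunc_{\delta_{min}}$. In the one-step case this followed from pointwise continuity in $\inp$. Here, a parameter outside $\Xi^c$ might violate constraints only on a set of states of small probability, so the expected constraint $\mathcal{C}_{\horizon}$ could vanish even though $\bU\notin\Xi^c$. To handle this, I would first try to define $\Xi^c$ as the zero set of $\bU\mapsto \mathcal{C}_{\horizon}(\state_t,\bU)$, i.e., feasibility almost surely along the rollout. That makes the zero set closed and gives positivity of the minimum on the compact complement neighborhood. Feasibility is then read in the almost-sure sense in the conclusion.
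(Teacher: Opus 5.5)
Your route is the paper's. Its proof checks that the lifted $\horizon$-step kernel, cost and constraint are continuous in $\bU$, that $\mathcal{L}_{\horizon}\succ 0$ and $\mathcal{C}_{\horizon}\succeq 0$, and that $V$ satisfies an $\horizon$-step Bellman identity over $\Xi$. It then invokes Theorem~\ref{theorem:onestep} with $\bU$ in place of $\inp_t$, and you simply unroll that invocation into its three steps. Defining $\Xi^c$ as the zero set of $\bU\mapsto\mathcal{C}_{\horizon}(\state_t,\bU)$ is an improvement in care. The paper tacitly does the same when it reads $\mathcal{C}_{\horizon}(\state_t,\bU_c)=0$ as stagewise feasibility, which holds only almost surely. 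Without it, $\constraintfunc_{\delta_{min}}>0$ can fail exactly as you describe.

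There is, however, a mismatch between what your argument proves and what the statement asserts. Your identity correctly carries $\beta^{\horizon}$, which the $\beta^{i-t}$ weights in Definition~\ref{def:multicost} force. Rearranging it at $\bU^*$ therefore gives $\beta^{\horizon}\mathbb{E}\left[V(\state_{t+\horizon})\mid\state_t,\bU^*\right]-V(\state_t)=-\mathcal{G}_{\horizon}(\state_t,\bU^*)<0$. The statement's $\Delta V$ uses a single $\beta$, and at $\bU^*$ it equals $(\beta-\beta^{\horizon})\mathbb{E}\left[V(\state_{t+\horizon})\mid\state_t,\bU^*\right]-\mathcal{G}_{\horizon}(\state_t,\bU^*)$. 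Since $V\ge 0$ (as $\rlcostfunc\ge 0$), its sign is not settled by $\mathcal{G}_{\horizon}\succ 0$ once $\horizon>1$. The single-$\beta$ form matches only the paper's displayed multi-step identity, which writes $\beta$ where $\beta^{\horizon}$ belongs. So either state your conclusion with $\beta^{\horizon}$, or add the multi-step analogue of the $\rlcostfunc_{\beta}\succ 0$ condition from Lemma~\ref{lemma:region}, namely $\mathcal{G}_{\horizon}(\state_t,\bU^*)>(\beta-\beta^{\horizon})\mathbb{E}\left[V(\state_{t+\horizon})\mid\state_t,\bU^*\right]$, and run your continuity argument on that quantity.

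Two smaller points. First, the radius $\delta$ from your first step depends on $\gamma_c$, through $V$ and $\bU^*_{\gamma_c}$. So $\delta_{min}\le\delta$ is not delivered by that step; it is the uniform hypothesis built into the definition of $\delta_{min}$ in Theorem~\ref{theorem:onestep}, and you should cite it as such. Second, your induction for continuity in $\bU$ needs the one-step kernel to be weakly continuous jointly in $(\state,\inp)$, not merely in $\inp$, because the intermediate state distribution also moves with $\bU$. The paper does assume this joint continuity in its problem formulation.
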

\begin{proof}

    We show that the assumptions necessary for Theorem \ref{theorem:onestep} apply not only to the single-step dynamics and cost, but also to the multi-step versions.

    It follows from Definitions \ref{def:multidynamics} and \ref{def:multicost} and the assumptions above that the multi-step dynamics, cost, and constraint depend continuously on $\state_t$ and $\bU$. Likewise, the conditions that $\mathcal{L}_{\horizon}(\state_t,\bU) \succ 0$ and $\mathcal{C}_{\horizon}(\state_t,\bU) \succeq 0$ follow from definition \ref{def:multicost}.

    Finally, we observe that $V(\state_t)$ is not only the optimal value function for the single-step problem, but also for the multi-step feedback problem when the feedback policy is parametrized by $\bU \in \Xi$. This is shown through repeated application of Equation \ref{eq:sto_optimal_value_function} and the assumption that the policy parameterization is rich enough to represent the optimal multi-step feedback policy:
    \begin{align} 
        V(\state_t) &= \min_{\bU \in \Xi} \bigg[ \mathcal{G}_N(\state_t,\bU)+\beta\mathbb{E}\left[V(\state_{t+\horizon}) \mid \state_t, \bU \right] \bigg ].
    \end{align}
    
    By applying Theorem \ref{theorem:onestep} to $\bU$ instead of $\inp_t$, we show $\exists\bU_c$ s.t. $\Delta V(\state_t, \bU_c)<0$ and $\mathcal{C}_{\horizon}(\state_t,\bU_c) = 0$ which implies $\constraintfunc(\state_{i}, \bpi_i(\state_i, \bU_c))=0 \ \forall \ i \in \{t,...,t+\horizon-1\}$. \qedhere

\end{proof}

Thus, this analysis provides theoretical justification for using NMPC to search for feasible feedback policies in the neighborhood of the optimal policies which descend the value function while strictly adhering to constraints.

\subsection{Approximate Value Functions}

The preceding analysis assumes an optimal value function while our proposed algorithm uses learned actor and critic models. We acknowledge that in practice, these models converge to uncertain approximations of the optimal policy and value function. We define the expected change in the stochastic approximate value function as 
\begin{equation}
\Delta \hat{V}(\state_t,\inp_t) = \beta\mathbb{E}\left[\approxvaluefunc(\state_{t+1}) \mid \state_t,\inp_t\right]-\mathbb{E}[\approxvaluefunc(\state_t)].
\end{equation}

For a given $\state_t$ and $\inp_t$, there is a gap between the change in the approximate and optimal value functions
\begin{equation}
e_V(\state_t, \inp_t) = \Delta \hat{V}(\state_t,\inp_t) - \Delta V (\state_t,\inp_t).
\end{equation}
To guarantee that the optimal value function, $V(\state_t)$ actually decreases, the expected change in the approximate value function must decrease by more than this gap, 
\begin{equation}
\Delta \hat{V}(\state_t,\inp_t) < e_V(\state_t, \inp_t).
\end{equation}
Given that $V(\state_t)$ is unknown in practice, this gap is also unknown. Thus, we simply aim to ensure that $\Delta \hat{V}(\state_t,\inp_t) < 0$. 

\vspace{-2mm}

\subsection{Simulation Experiment}
We explore this analysis empirically using a stochastic Dubins Car dynamical system. The state is given as $\state_t = [r_x \ r_y \ \theta]^T$ where $r_x$, $r_y$ denote the position and $\theta$ is the orientation. The input is given as $\inp_t = [v \ \dot{\theta}]^T$ where $v$ is the speed and $\dot{\theta}$ is the angular velocity. The nominal continuous dynamics are given as $\nominaldynamicsfunc(\state_t, \inp_t) = [v\cos(\theta), v\sin(\theta), \dot{\theta}]^T$. The stochastic, discrete-time dynamics applies Euler integration with $\Delta t=0.1$ sec and Gaussian process noise with $\boldsymbol{\Sigma}_f = [0.01 \ 0.01 \ 0.01]^T$.

We constructed a simple environment in which the goal is surrounded by dense obstacles arranged on a hexagonal lattice (Fig. \ref{fig:toy-environment}). Initial states are sampled at positions $7.5\mathrm{m}$ away from the goal with the orientation pointing towards the goal.

We trained 13 actor-critic networks with increasingly large constraint penalties, $\gamma_c$. The inputs into these networks were the normalized position, sine/cosine of the orientation, and the normalized range and sine/cosine of the bearing to each obstacle. Each network was trained for 5 million timesteps.

Across 100 trials with differing initial states, we compared the actor policy against AC-PAC-NMPC for each $\gamma_c$. We evaluated the percentage of trials in which the system collided with obstacles and reached the goal without colliding. Additionally, we evaluated our RL-based warm-start with and without the actor infeasibility check (lines 6-8 in Algo. \ref{algo:warm-start}).

As shown in Figure \ref{fig:toy-results}, AC-PAC-NMPC was able to successfully navigate the environment at smaller $\gamma_c$ values than the actor policy alone. This supports Theorem \ref{theorem:multistep} empirically and shows that for a large enough penalty, a feasible policy exists in the neighborhood of the actor policy which descends the value function. Additionally, these results show that by warm-starting with the prior policy when the actor trajectory is infeasible, the safety of AC-PAC-NMPC can be significantly improved.

\begin{figure}[t]
    \centering
    \begin{minipage}[t]{0.33\columnwidth}
        \centering
        \includegraphics[trim={105 45 105 45},clip,width=\columnwidth]{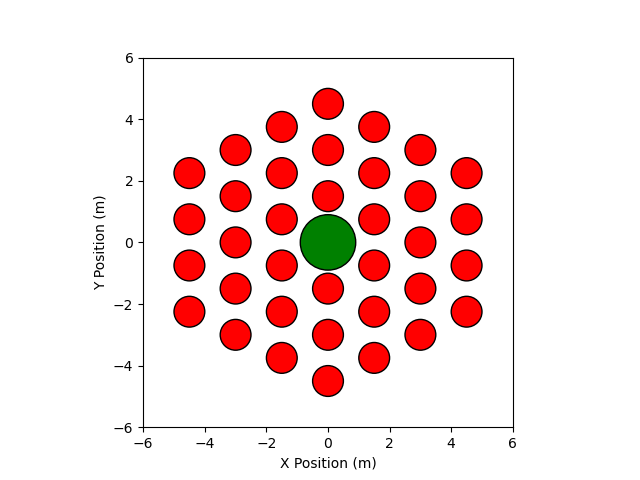}
        \caption{Green goal, red obstacles.}
        \label{fig:toy-environment}
    \end{minipage}\hfill
    \begin{minipage}[t]{0.63\columnwidth}
    \centering
    \includegraphics[trim={0 0 0 0},clip,width=\columnwidth]{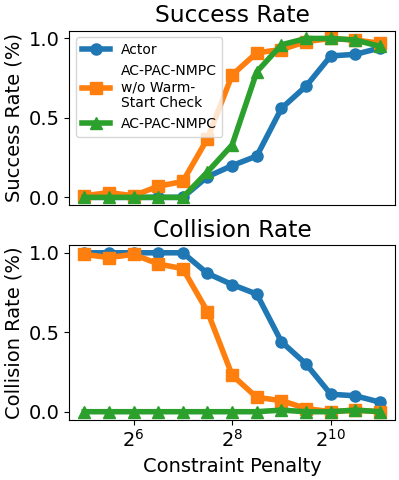}
    \caption{Success rate and collision rate for each constraint penalty value.}
    \label{fig:toy-results}
    \end{minipage}
\end{figure}

\section{Conclusion}

In this paper, we present an approach for RL-guided SNMPC to enable long-range navigation in unknown environments while enforcing constraints on probabilistic guarantees of safety. The proposed approach utilized learned actor-critic and sensor prediction models to incorporate RL-based warm starts, an uncertainty-aware terminal value function, and a value function improvement constraint into PAC-NMPC. We demonstrated that our approach, both in simulation and on hardware, is capable of navigating a robotic system with high-dimensional, nonlinear, underactuated, stochastic dynamics and local perception through an unknown environment in real-time. We showed that our approach was more robust to model mismatch, sim-to-real transfer, and distribution shift than the RL actor alone. Further, our approach achieved superior long-range performance than PAC-NMPC alone.

There are many interesting directions for future research. While it is beneficial that our approach can utilize actor-critic models trained with standard algorithms, it may be interesting to train it jointly with sensor prediction and PAC-NMPC in the loop. Although Monte Carlo dropout provides a fast approximation of uncertainty in the learned models, our approach would be improved by incorporating a method to calibrate, and potentially bound, the uncertainty estimates. Finally, further analysis into the gap between the estimated value function and the true value function could be explored.

\bibliographystyle{IEEEtran}
\bibliography{references.bib}

\end{document}